\documentclass{article}

\usepackage{arxiv}
\usepackage[utf8]{inputenc} 
\usepackage[T1]{fontenc}    
\usepackage{url}            
\usepackage{booktabs}       
\usepackage{nicefrac}       
\usepackage{microtype} 

\usepackage{microtype}
\usepackage{graphicx}
\usepackage{subfigure}
\usepackage{booktabs} 
\usepackage{array}

\usepackage{hyperref}

\usepackage{caption}
\usepackage{subfigure}
\usepackage{subfloat}
\usepackage{float}


\usepackage{amsmath}
\usepackage{amssymb}
\usepackage{mathtools}
\usepackage{amsthm}
\usepackage{tikz}
\usetikzlibrary{cd}

\usepackage[capitalize,noabbrev]{cleveref}

\theoremstyle{plain}
\newtheorem{theorem}{Theorem}[section]
\newtheorem{proposition}[theorem]{Proposition}

\newtheorem{corollary}[theorem]{Corollary}
\theoremstyle{definition}
\newtheorem{definition}[theorem]{Definition}

\theoremstyle{remark}

\title{Topology-Preserving Dimensionality Reduction via Interleaving Optimization}
\DeclareMathOperator{\img}{img}
\DeclareMathOperator{\dgm}{dgm}
\newcommand{\calR}{\mathcal{R}}
\newcommand{\dB}{\mathrm{d}_{\mathrm{B}}}
\newcommand{\dI}{\mathrm{d}_{\mathrm{I}}}

\author{
Bradley J. Nelson\thanks{Two authors contributed equally to this work.} \thanks{Corresponding author} \\
	Department of Statistics\\
	University of Chicago\\
	Chicago, IL 60637\\
	\texttt{bradnelson@uchicago.edu} \\
   \And
 Yuan Luo\footnotemark[1] \footnotemark[2]\\
	Committee on Computational and Applied Mathematics\\
	University of Chicago\\
	Chicago, IL 60637\\
	\texttt{yuanluo@uchicago.edu}\\
}

\begin{document}
\maketitle
\begin{abstract}
Dimensionality reduction techniques are powerful tools for data preprocessing and visualization which typically come with few guarantees concerning the topological correctness of an embedding.  The interleaving distance between the persistent homology of Vietoris-Rips filtrations can be used to identify a scale at which topological features such as clusters or holes in an embedding and original data set are in correspondence.  We show how optimization seeking to minimize the interleaving distance can be incorporated into dimensionality reduction algorithms, and explicitly demonstrate its use in finding an optimal linear projection. We demonstrate the utility of this framework to data visualization. 
\end{abstract}

\keywords{Dimensionality Reduction \and Interleaving Distance \and Persistent Homology}

\section{Introduction}

Dimension reduction is an important component of many data analysis tasks, but can be potentially problematic as it may ``reveal'' structure in data which is not truly present.  In inference this can be addressed by principled use of a withheld test set or an analysis which addresses model selection more directly.  
However, in exploratory data analysis it can be difficult to address selection problems incurred by exploration of different dimension reduction techniques, such as whether visualized structures are really present or an artifact of the chosen embedding.
In this paper, we develop the use of the interleaving distance for the purpose of quantifying the extent to which \emph{topological} features of an embedding relate to features in the original data set.  Explicitly, we can compute a threshold after which features of a certain size in the persistent homology of the Vietoris-Rips filtration are in one-to-one correspondence between the data set before and after dimension reduction.  Furthermore, we show how to find local minima of this threshold through optimization and demonstrate this on the task of finding optimal projections of a data set.

\subsection{Related Work}

Optimization of persistent homology-based objective functions has attracted much recent attention \cite{poulenardTopologicalFunctionOptimization2018,gabrielssonTopologyLayerMachine2020,hoferGraphFiltrationLearning2020, kimPLLayEfficientTopological2020, carrierePersLayNeuralNetwork2020a, carriereOptimizingPersistentHomology2021, leygonieFrameworkDifferentialCalculus2021} with a particular focus on applications in computational geometry and deep learning.  Of particular relevance is the work of \cite{moorTopologicalAutoencoders2020} which uses a persistence-based objective to preserve critical edges in the persistent homology of the Vietoris-Rips filtration in a learned latent space of an autoencoder.

The idea of using persistent homology to compare different dimension reduction schemes was initiated by Rieck and Leitte \cite{rieckPersistentHomologyEvaluation2015, rieckAgreementAnalysisQuality2017}, which uses the $2$-Wasserstein distance to compare the persistence diagrams of a data set before and after dimension reduction.  Several non-differentiable methods incorporating persistent homology have been developed \cite{desilvaPersistentCohomologyCircular2011a,yanHomologyPreservingDimensionalityReduction2018, doraiswamyTopoMap0dimensionalHomology2021}.  With the development of optimization techniques for persistent homology, several differentiable methods have been proposed based on optimization of the $2$-Wasserstein metric on persistence diagrams \cite{kachanPersistentHomologybasedProjection2020, wagnerImprovingMetricDimensionality2021} and an approach based on simulated annealing \cite{yuShapePreservingDimensionalityReduction2021}.  While not employed on Vietoris-Rips filtrations, the work of \cite{poulenardTopologicalFunctionOptimization2018} uses the bottleneck distance for optimization of functional maps on shapes.

The interleaving/bottleneck distance has long been a key tool developed for the study of persistent homology under perturbation of the input \cite{cohen-steinerStabilityPersistenceDiagrams2007}.  The interleaving distance was first introduced by \cite{chazalProximityPersistenceModules2009}, and applied to the study of Vietoris-Rips filtrations in \cite{chazalGromovHausdorffStableSignatures2009} to bound the distance of persistent homology by the Gromov-Hausdorff distance between the input point clouds.  The idea of developing confidence regions for persistence pairs was developed by \cite{fasyConfidenceSetsPersistence2014} in the context of sampling.

\subsection{Contributions}

This work presents a novel approach to dimension reduction using optimization of the interleaving/bottleneck distance between the persistent homology of Vietoris-Rips Filtrations of an original data set $X$ and the data set $Y$ after dimensionality reduction.
\begin{enumerate}
\setlength\itemsep{0em}
    \item We show how the interleaving distance can be used to quantify a scale at which topological features in $X$ and features in $Y$ are in correspondence, and be used to select homological features of $Y$ in correspondence with features in $X$.
    \item We show how to incorporate the interleaving distance explicitly into the optimization of the embedding $Y$ and prove the existence of descent directions under mild conditions.
    \item We demonstrate this technique in finding optimal linear projections of the data set $X$ to preserve the bottleneck distance on several examples\footnote{Our implementations are made publicly available at 
\url{https://github.com/CompTop/Interleaving-DR}.} with interesting topology.
\end{enumerate}

\section{Background}\label{section:Background}

\subsection{Persistent Homology of Vietoris-Rips Filtrations}

We are interested in discovering and preserving topological features of a point cloud $X$ together with a notion of dissimilarity $d$, and refer to the combination of these two data as a dissimilarity space $(X,d)$.  A dissimilarity is a function $d:X\times X\to \mathbb{R}_{\ge 0}$, with $d(x,x)=0$ for any $x\in X$.  We will typically consider dissimilarities that are metrics (in particular which satisfy triangle inequality), but many of the bounds here hold more generality. 
Examples of topological features of the space $(X,d)$ include clusters formed through single-linkage clustering or ``holes'' forming loops in the $r$-nearest neighbors graph of $X$.

Vietoris-Rips filtrations are commonly used in conjunction with persistent homology to create features for finite dimensional metric spaces (point clouds) \cite{carlssonTopologicalPatternRecognition2014b}. Given a dissimilarity space $(X, d)$, a Vietoris-Rips complex consists of simplices with a maximum pairwise dissimilarity between vertices is less than some threshold $r$ :
$$
X_{r}=\left\{\left(x_{0}, \ldots, x_{k}\right) \mid x_{i} \in X, d\left(x_{i}, x_{j}\right) \leq r\right\}
$$
A Vietoris-Rips filtration is a nested sequence of Vietoris-Rips complexes $X_{r} \subseteq X_{s}$ if $r \leq s$.

Homology is a functor from the category of topological spaces and continuous maps to the category of vector spaces and linear maps (for a general introduction see \cite{hatcherAlgebraicTopology2002}).  The dimension of the $k$-dimensional homology vector space $H_k(X)$ of a topological space $X$ counts $k$-dimensional topological features of $X$: $\dim H_0(X)$ is the number of connected components, $\dim H_1(X)$ counts loops, and $\dim H_k$ generally counts $k$-dimensional voids.  A continuous map $f:X\to Y$ has an induced map $H_k(f):H_k(X) \to H_k(Y)$ which maps vectors associated with topological features in $X$ to vectors associated with topological features in $Y$.  The computation of $H_k(X)$ begins with the construction of a chain complex $C_\ast(X) = \{C_k(X), \partial_k: C_k(X) \to C_{k-1}(X)\}_{k\ge 0}$ where $C_k(X)$ is a vector space with a basis element for each $k$-simplex in $X$, and the boundary map $\partial_k$ sends each basis element to a linear combination of basis elements of faces in the boundary of the associated simplex.  
The boundary maps satisfy $\partial_k \circ \partial_{k+1} = 0$, and $H_k(X)$ is the quotient vector space $\ker \partial_k / \img \partial_{k+1}$.

Persistent homology \cite{edelsbrunnerTopologicalPersistenceSimplification2002, zomorodianComputingPersistentHomology2005} is an algebraic invariant of filtrations which captures how the topology of a filtration changes using homology.  The output of persistent homology is a persistence vector space $V_\ast$, consisting of vector spaces $\{V_r = H_k(X_r)\}_{r\in \mathbb{R}}$ and linear maps induced by inclusion $\{\iota^V_{r,s}:V_r\to V_s\}_{r \le s \in \mathbb{R}}$ which satisfy a consistency condition $\iota^V_{r,t} = \iota^V_{r,s} \iota^V_{s,t}$ for all $r \le s \le t$.  

Persistence vector spaces are classified up to isomorphism by birth-death pairs $\{(b_i, d_i)\}_{i\in I}$, or equivalently their persistence barcode \cite{zomorodianComputingPersistentHomology2005} or interval indecomposables \cite{carlssonZigzagPersistence2010}.  Each pair $(b,d)$ is associated to the appearance of a new homology vector at filtration parameter $b$ (meaning it is not in the image of an induced map), which maps through the persistence vector space until it enters the kernel of an induced map at filtration parameter $d$.  The length of the pair $(b,d)$ is the difference $|d- b|$.

Every birth and death in persistent homology is associated with the addition of a particular simplex in the filtration.  This follows from the definition of homology of the quotient vector space $H_k(X) = \ker \partial_k / \img \partial_{k+1}$.  The addition of a new $k$-simplex increases the dimension of $C_k(X)$ by one, and either increases the dimension of $\ker \partial_k$ by one, causing a birth in $H_k(X)$, or increases the dimension of $\img \partial_{k}$ by one, causing a death in $H_{k-1}(X)$. 
Because the Vietoris-Rips filtration is determined by its edges, the filtration value of every simplex can be mapped to the largest pairwise distance.  This provides a way to map the gradient of a function with respect to births and deaths to a gradient with respect to each pairwise distance -- see \cite{gabrielssonTopologyLayerMachine2020} for additional details.




\subsection{The Interleaving and Bottleneck Distances}

Interleavings allow for the comparison of two persistence vector spaces \cite{chazalProximityPersistenceModules2009}, as well as other objects filtered by some partially ordered set \cite{bubenikCategorificationPersistentHomology2014}.  Let $V_\ast$ and $W_\ast$ be 1-parameter persistence vector spaces.  An $\epsilon$-shift map $f_\ast:V_\ast \to W_\ast$ is a collection of maps $f_r:V_r\to W_{r+\epsilon}$ so that the following diagram commutes for all parameters $r$
\begin{equation}
\begin{tikzcd}
V_r \ar[r,"\iota^V_{r,s}"]\ar[d,"f_r"] & V_s\ar[d, "f_s"]\\
W_{r+\epsilon} \ar[r,"\iota^W_{r+\epsilon, s+\epsilon}"] & W_{s+\epsilon}
\end{tikzcd}
\end{equation}
An $\epsilon$-interleaving between $V_\ast$ and $W_\ast$ is a pair of $\epsilon$-shift maps $f_\ast:V_\ast\to W_\ast$ and $g_\ast:W_\ast \to V_\ast$ so that $g_{r+\epsilon} f_r = \iota^V_{r,r+2\epsilon}$ and $f_{r+\epsilon} g_r = \iota^W_{r,r+2\epsilon}$ for all parameters $r$.

The interleaving distance \cite{chazalProximityPersistenceModules2009} on persistence modules $V_\ast$ and $W_\ast$ is
\begin{equation}
d_{\mathrm{I}}(V_\ast, W_\ast) = \inf \{\epsilon \ge 0 \mid \text{$V_\ast$ and $W_\ast$ are $\epsilon$-interleaved}\}
\end{equation}
This notion of distance satisfies triangle inequality through the composition of interleavings.  Note that the addition or removal an arbitrary number of zero-length pairs to a persistence vector space $V_\ast$ to obtain $V'_\ast$ results in $d_I(V_\ast, V'_\ast) = 0$.

The construction of general interleavings, let alone those that would realize the interleaving distance, can be a daunting task.  Fortunately, for 1-parameter persistent homology the interleaving distance $d_I$ is equivalent to the geometric (and easily computable) bottleneck distance $d_B$ on persistence diagrams \cite{lesnickTheoryInterleavingDistance2015}.

The bottleneck distance considers the birth-death pairs $\{(b_i,d_i)\}_{i\in I}$ as points in the 2-dimensional plane.  The persistence diagram $\dgm(V_\ast)$ is the union of this discrete multi-set of the points $\{(b_i,d_i)\}$ with the diagonal $\Delta = \{(x,x) \mid x\in \mathbb{R}\}$ where points in $\Delta$ are counted with infinite multiplicity.

\begin{definition}
A matching between two persistence diagrams $\dgm_{1}$ and $\dgm_{2}$ is a subset $\Omega \subseteq \operatorname{dgm}_{1} \times \dgm_{2}$ such that every points in $\dgm_{1} \setminus \Delta$ and $\dgm_{2} \setminus \Delta$ appears exactly once in $m$. 
\end{definition}

\begin{definition}\label{def:bottleneck_dist}
The Bottleneck distance between $\dgm_{1}$ and $\dgm_{2}$ is then defined by

\begin{equation}\label{eq:bottleneck_distance}
\mathrm{d}_{\mathrm{B}}\left(\dgm_{1}, \dgm_{2}\right)=\inf _{\text {matching }} \max _{(p, q) \in \Omega}\|p-q\|_{\infty}
\end{equation}
\end{definition}

The bottleneck distance on persistence diagrams is isometric to the interleaving distance on persistence vector spaces, which is a result known as the isometry theorem:
\begin{theorem} \cite{lesnickTheoryInterleavingDistance2015}
Let $\dgm(V_{\ast})$ and $\dgm(W_{\ast})$ be the persistent diagrams of $V_{\ast}$ and $W_{\ast}$ respectively. Then
$$
d_{\mathrm{I}}(V_{\ast}, W_{\ast})=d_{\mathrm{B}}(\dgm(V_{\ast}), \dgm(W_{\ast}))
$$
\end{theorem}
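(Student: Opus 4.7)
The plan is to prove the two inequalities $d_{\mathrm{B}} \le d_{\mathrm{I}}$ and $d_{\mathrm{I}} \le d_{\mathrm{B}}$ separately. Throughout I will work under the standing assumption that $V_\ast$ and $W_\ast$ are pointwise finite-dimensional, so that the structure theorem applies and each module decomposes as a direct sum of interval modules $\bigoplus_{i} k[b_i, d_i)$, where the intervals correspond bijectively (up to zero-length pairs) to the points of the persistence diagram.

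For the easy direction $d_{\mathrm{I}} \le d_{\mathrm{B}}$, I would fix any $\delta$-matching $\Omega$ between $\dgm(V_\ast)$ and $\dgm(W_\ast)$ and use it to write down an explicit $\delta$-interleaving. For each matched pair of intervals $I = [b,d)$ in $V_\ast$ and $J = [b',d')$ in $W_\ast$ with $\|(b,d)-(b',d')\|_\infty \le \delta$, the shifted intervals $I$ and $J[-\delta,-\delta]$ overlap enough that the identity map on $k$ extends to a shift map between them; intervals matched to the diagonal (that is, with length $\le 2\delta$) are sent to zero. Direct-summing these interval-level maps produces shift maps $f_\ast : V_\ast \to W_{\ast+\delta}$ and $g_\ast : W_\ast \to V_{\ast+\delta}$, and checking that $gf$ and $fg$ equal the $2\delta$-structure maps reduces to a case check on each interval summand.

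The harder direction $d_{\mathrm{B}} \le d_{\mathrm{I}}$ requires, given a $\delta$-interleaving $(f_\ast, g_\ast)$, producing a $\delta$-matching of diagrams. My preferred route is the induced matching construction of Bauer--Lesnick. The key input is that any morphism $\phi : A_\ast \to B_\ast$ of interval-decomposable persistence modules whose kernel and cokernel have only short intervals (roughly length $\le 2\delta$) canonically induces a partial matching of the barcodes that moves each matched endpoint by at most $\delta$. Applying this to the shift map $f_\ast$, viewed as a morphism $V_\ast \to W_{\ast + \delta}$, and using the interleaving relations $gf = \iota^V_{\ast,\ast+2\delta}$ and $fg = \iota^W_{\ast,\ast+2\delta}$ to bound the sizes of intervals in $\ker f_\ast$ and $\mathrm{coker}\, f_\ast$, yields a $\delta$-matching of $\dgm(V_\ast)$ with $\dgm(W_\ast)$.

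The main obstacle is the second direction, and specifically the structural lemma that controls $\ker f_\ast$ and $\mathrm{coker}\, f_\ast$ via the $2\delta$-structure map identity. Equivalently, one can take the classical route through the interpolation lemma, building a continuous one-parameter family of modules connecting $V_\ast$ to a $\delta$-shift of $W_\ast$ and invoking the box lemma to track how diagram points move; either way, the technical heart is showing that any long bar in $V_\ast$ must be matched to a nearby long bar in $W_\ast$, which is essentially a rank-function comparison argument. Combining both inequalities gives the claimed isometry.
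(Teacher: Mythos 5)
The paper offers no proof of this statement at all --- it is quoted as a known result and attributed to Lesnick's work on the theory of the interleaving distance --- so there is no in-paper argument to compare yours against. Judged on its own terms, your outline is a faithful sketch of the standard proof. The split into $d_{\mathrm{I}} \le d_{\mathrm{B}}$ (build a $\delta$-interleaving summand-by-summand from a $\delta$-matching, sending diagonal-matched intervals to zero and checking the $2\delta$-composition identity on each interval) and $d_{\mathrm{B}} \le d_{\mathrm{I}}$ (extract a $\delta$-matching from a $\delta$-interleaving) is exactly how the literature proceeds, and you correctly identify the two known routes for the hard direction: the Bauer--Lesnick induced-matching construction, where the interleaving relations force $\ker f_\ast$ and $\operatorname{coker} f_\ast$ to be $2\delta$-trivial, or the older interpolation-plus-box-lemma argument of Chazal et al. Two caveats to make explicit if you were to write this out in full: first, the entire argument rests on the structure theorem for pointwise finite-dimensional modules (your standing assumption), which is automatic here since the paper only ever applies the theorem to Vietoris--Rips filtrations of finite point clouds, whose modules are finitely presented; second, to get equality of the two distances rather than just equality of infima over open conditions, you need that both infima are attained, which again holds for finite barcodes but deserves a sentence. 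The technical heart --- the induced matching theorem itself, i.e.\ that a morphism with $2\delta$-trivial kernel and cokernel induces a $\delta$-matching of barcodes --- is named but not proved in your sketch; that is the one piece you are genuinely outsourcing, and it is where essentially all of the difficulty of the isometry theorem lives.
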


The matching in the bottleneck distance actually gives an interleaving which maps a vector associated to a persistence pair  in $\dgm(V_\ast)$ to the vector associated with the matched pair in $\dgm(W_\ast)$ which realizes the interleaving distance.


\subsection{Bounds on the Interleaving Distance}
In practice, persistent homology of the Vietoris-Rips filtration can be quickly approximated using sub-sampling.  Bounds on this approximation come from the Hausdorff or Gromov-Hausdorff distance on between point clouds \cite{chazalGromovHausdorffStableSignatures2009}. 

\begin{definition}
The Hausdorff distance between two subsets $X$ and $Y$ within the same metric space is
$$
d_{H}(X, Y)=\max \left\{\sup _{x} \inf _{y}\|x-y\|_{\infty}, \sup _{y} \inf _{x}\|y-x\|_{\infty}\right\} 
$$
\end{definition}

\begin{definition}
A correspondence between two sets $X$ and $Y$ is a subset $C \subset X \times Y$ such that: $\forall x \in X, \exists y \in Y$ s.t. $(x, y) \in C$, and $\forall y \in Y, \exists x \in X$ s.t. $(x, y) \in C$. The set of all correspondences between $X$ and $Y$ is denoted by $\mathcal{C}(X, Y)$.
\end{definition}

\begin{definition}
The Gromov-Hausdorff distance between compact metric spaces $\left(X, \mathrm{~d}_{X}\right),\left(Y, \mathrm{~d}_{Y}\right)$ is:
$$
\mathrm{d}_{\mathrm{GH}}\left(\left(X, \mathrm{~d}_{X}\right),\left(Y, \mathrm{~d}_{Y}\right)\right)=\frac{1}{2} \inf _{C \in \mathcal{C}(X, Y)}\left\|\Gamma_{X, Y}\right\|_{l ^\infty(C \times C)},
$$
where $\Gamma_{X, Y}: X \times Y \times X \times Y \rightarrow \mathbb{R}^{+}$ is defined by $\left(x, y, x^{\prime}, y^{\prime}\right) \mapsto\left|\mathrm{d}_{X}\left(x, x^{\prime}\right)-\mathrm{d}_{Y}\left(y, y^{\prime}\right)\right|$ and the notation $\left\|\Gamma_{X, Y}\right\|_{l^{\infty}(C \times C)}$ stands for $\sup _{(x, y),\left(x^{\prime}, y^{\prime}\right) \in C} \Gamma_{X, Y}\left(x, y, x^{\prime}, y^{\prime}\right)$.
\end{definition}  

\begin{theorem}\label{theorem:bottleneck less than GH}
\cite{chazalGromovHausdorffStableSignatures2009} For any finite metric spaces $\left(X, \mathrm{~d}_{X}\right)$ and $\left(Y, \mathrm{~d}_{Y}\right)$, for any $k \in \mathbb{N}$, the bottleneck distance between two $k$-th persistent diagrams of Rips filtrations is bounded by the Gromov-Hausdorff between two spaces
$$
\begin{aligned}
\dB (\dgm(X), \dgm(Y))
\leq \mathrm{d}_{\mathrm{GH}}\left(\left(X, \mathrm{~d}_{X}\right),\left(Y, \mathrm{~d}_{Y}\right)\right) .
\end{aligned}
$$
\end{theorem}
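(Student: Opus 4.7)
The plan is to invoke the isometry theorem to reduce the claim to a bound on the interleaving distance $\dI$, and then construct an interleaving from a near-optimal correspondence. Fix $\eta > 0$ and choose $C \in \mathcal{C}(X, Y)$ with $\|\Gamma_{X,Y}\|_{l^\infty(C \times C)} \leq 2(\mathrm{d}_{\mathrm{GH}} + \eta)$, together with section maps $\phi \colon X \to Y$ and $\psi \colon Y \to X$ satisfying $(x, \phi(x)), (\psi(y), y) \in C$. The distortion bound immediately gives the Lipschitz-type estimate $d_Y(\phi(x), \phi(x')) \leq d_X(x, x') + 2(\mathrm{d}_{\mathrm{GH}} + \eta)$ and its symmetric counterpart for $\psi$, so $\phi$ lifts to a simplicial map $X_r \to Y_{r + 2(\mathrm{d}_{\mathrm{GH}} + \eta)}$ and $\psi$ to $Y_r \to X_{r + 2(\mathrm{d}_{\mathrm{GH}} + \eta)}$, furnishing the two shift maps required by the interleaving.

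The core step is verifying the two composition conditions at the level of homology. The key observation is that for each $x \in X$, both $(x, \phi(x))$ and $(\psi\phi(x), \phi(x))$ lie in $C$, so the distortion bound applied to this pair (which share a $y$-coordinate) gives $d_X(x, \psi\phi(x)) \leq 2(\mathrm{d}_{\mathrm{GH}} + \eta)$; an analogous inequality holds on the $Y$ side. Combined with the Lipschitz-type estimate above, this implies that for any simplex $\sigma$ of $X_r$ the enlarged vertex set $\sigma \cup \psi\phi(\sigma)$ has all pairwise $d_X$-distances bounded by a linear shift of $r$, so the two simplicial maps $\iota$ and $\psi \circ \phi$ into a sufficiently enlarged Rips complex on $X$ are contiguous and hence induce the same homology map, which is exactly the first interleaving composition condition. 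A mirror argument on $Y$ handles $\phi \circ \psi$, producing a valid interleaving; letting $\eta \to 0$ yields the bound on $\dI$ and, via the isometry theorem, on $\dB$.

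The main obstacle I anticipate is bookkeeping the shift constants to match the stated inequality. The section-based construction above gives an interleaving shift on the order of $2\mathrm{d}_{\mathrm{GH}}$ rather than $\mathrm{d}_{\mathrm{GH}}$, because each use of the distortion bound pays the full $2\mathrm{d}_{\mathrm{GH}}$ penalty without benefit of the factor of $\tfrac{1}{2}$ in the definition of $\mathrm{d}_{\mathrm{GH}}$. Recovering the sharp constant appears to require replacing the single-valued sections with the multi-valued map $x \mapsto \{y : (x,y) \in C\}$ and invoking the acyclic carrier theorem to obtain chain-level maps realizing the tighter shift, with the composition conditions verified through explicit chain homotopies carried by the same multi-valued assignment.
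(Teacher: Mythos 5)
The paper offers no proof of this statement---it is quoted from Chazal et al.---so I can only assess your argument on its own terms. The section-based construction in your first two paragraphs is the standard one and is correct as far as it goes: a section $\phi$ of a correspondence $C$ with $\mathrm{dis}(C)\le 2(\mathrm{d}_{\mathrm{GH}}+\eta)$ gives simplicial maps $X_r\to Y_{r+2(\mathrm{d}_{\mathrm{GH}}+\eta)}$, the shared-coordinate observation gives $d_X(x,\psi\phi(x))\le 2(\mathrm{d}_{\mathrm{GH}}+\eta)$, and contiguity of $\iota$ and $\psi\circ\phi$ as maps into $X_{r+4(\mathrm{d}_{\mathrm{GH}}+\eta)}$ verifies the interleaving identities. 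Letting $\eta\to 0$ this proves $\dB\le 2\,\mathrm{d}_{\mathrm{GH}}$.

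The gap is in your final paragraph. Replacing the sections by the multivalued map $x\mapsto C(x)$ and invoking acyclic carriers does not tighten the shift: for $\sigma\in X_r$ the full image $C(\sigma)$ still only has diameter $\le r+\mathrm{dis}(C)$, so the carrier lands in $Y_{r+2\mathrm{d}_{\mathrm{GH}}}$ exactly as before; the multivalued formulation buys canonicity of the induced map on homology, not a smaller constant. In fact no argument can recover the stated constant under this paper's conventions ($X_r$ consists of simplices of diameter $\le r$, and $\mathrm{d}_{\mathrm{GH}}=\tfrac12\inf_C\mathrm{dis}(C)$): take $X=\{a,b\}$ with $d_X(a,b)=2$ and $Y=\{a',b'\}$ with $d_Y(a',b')=4$. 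Then $\mathrm{d}_{\mathrm{GH}}(X,Y)=1$, while the $H_0$ diagrams are $\{(0,\infty),(0,2)\}$ and $\{(0,\infty),(0,4)\}$, so $\dB=2=2\,\mathrm{d}_{\mathrm{GH}}$. The inequality with constant $1$ is only valid under a different normalization (e.g., Rips complexes indexed by radius, admitting simplices of diameter $\le 2r$ at parameter $r$); with the definitions used in this paper the correct and sharp statement is $\dB\le 2\,\mathrm{d}_{\mathrm{GH}}$, which is precisely what your construction already proves. The right fix is therefore not a more refined chain-level argument but a correction of the constant or of the filtration convention.
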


The theorem above can also provide us an interleaving/bottleneck distance bound for sub-sampled data sets only if we can find their Gromov-Hausdorff distance. 

\begin{corollary}\label{corollary:bottleneck bound}
Let $X$ be a dataset and $Y$ be a low-dimensional embedding of $X$, and $X_{\text{sub}}$ and $Y_{\text{sub}}$ are their sub-sampled data sets. Then
$$
\begin{aligned}
d_{\mathrm{B}}(\dgm(X), \dgm(Y)) 
& \leq d_{\mathrm{GH}}(X, X_{\text{sub}}) \\
& + d_{\mathrm{B}}(\dgm(X_{\text{sub}}), \dgm(Y_{\text{sub}})) \\
& + d_{\mathrm{GH}}(Y, Y_{\text{sub}}) 
\end{aligned}
$$
\end{corollary}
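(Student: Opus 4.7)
The plan is to reduce the statement to two applications of already-established facts: the triangle inequality for the bottleneck distance and the Gromov--Hausdorff stability bound from \Cref{theorem:bottleneck less than GH}. The inequality has the shape of a ``three-hop'' bound, so the natural route is to insert $\dgm(X_{\text{sub}})$ and $\dgm(Y_{\text{sub}})$ as intermediate waypoints and control each hop separately.

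First I would invoke the triangle inequality for $\dB$ twice to obtain
\begin{equation*}
\dB(\dgm(X), \dgm(Y)) \leq \dB(\dgm(X), \dgm(X_{\text{sub}})) + \dB(\dgm(X_{\text{sub}}), \dgm(Y_{\text{sub}})) + \dB(\dgm(Y_{\text{sub}}), \dgm(Y)).
\end{equation*}
The triangle inequality for $\dB$ follows from the isometry theorem together with the fact, noted in the excerpt, that interleavings compose (so $\dI$ satisfies the triangle inequality directly); alternatively one can compose matchings coordinatewise in the $\ell^\infty$ sense. The middle term on the right-hand side is already the one appearing in the conclusion, so nothing more is needed there.

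Next, I would bound the first and third terms using \Cref{theorem:bottleneck less than GH}. Viewing $X$ and $X_{\text{sub}}$ as finite metric spaces with the induced metric, the theorem yields $\dB(\dgm(X), \dgm(X_{\text{sub}})) \leq d_{\mathrm{GH}}(X, X_{\text{sub}})$, and the same reasoning applied to $Y, Y_{\text{sub}}$ gives $\dB(\dgm(Y_{\text{sub}}), \dgm(Y)) \leq d_{\mathrm{GH}}(Y, Y_{\text{sub}})$. Summing and combining with the triangle inequality above produces the claimed bound.

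There is no real obstacle here: the only subtlety worth flagging is that \Cref{theorem:bottleneck less than GH} is stated for a fixed homological degree $k$, so the corollary should be read degree-by-degree as well; and that one is implicitly using the same symmetric dissimilarity on $X_{\text{sub}}$ as on $X$ (and likewise for $Y$) so that the stability theorem applies. Beyond these bookkeeping remarks, the proof is a one-line application of triangle inequality plus two invocations of the Chazal--de Silva--Oudot stability bound.
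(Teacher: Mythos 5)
Your proof is correct and matches the paper's intended argument: the paper's own proof is the single remark that ``the proof only needs the triangle inequality of metrics,'' and your expansion --- two applications of the triangle inequality for $\dB$ with $\dgm(X_{\text{sub}})$ and $\dgm(Y_{\text{sub}})$ as waypoints, followed by two invocations of \cref{theorem:bottleneck less than GH} to replace the outer bottleneck terms by Gromov--Hausdorff distances --- is exactly what that remark leaves implicit. Your bookkeeping caveats (fixed homological degree, induced metric on the subsamples) are reasonable and do not change the argument.
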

The proof only needs the triangle inequality of metrics. 




\section{Measuring Embedding Distortion through Interleavings}\label{section: Measuring Embedding Distortion through Interleavings}

The interleaving distance provides a natural way to measure how accurately a transformation of a data set $X$ with dissimilarity $d_X$ into a low-dimensional embedding $Y$ with dissimilarity $d_Y$ distorts topology.  In particular, it provides a way to eliminate topological type-I errors and reduce topological type-II errors when inferring information about the space $(X, d_X)$ via the embedding $(Y,d_Y)$, as might be done in data visualization.

In order to have a notion of topological error, we must select topological features of $(Y,d_Y)$ which are believed to be significant, meaning that they are believed to correspond to topological structures in $(X,d_X)$. 
\begin{definition}
A topological type-I error in $(Y,d_Y)$ is the selection of a feature in $(Y,d_Y)$ which has no corresponding feature in $(X,d_X)$.
\end{definition}
For example, if the embedding $(Y,d_Y)$ splits a single cluster in $(X,d_X)$ into two clusters, then making a distinction between the two clusters by selecting both $H_0$ pairs in $(Y,d_Y)$ would be a topological type-I error.
\begin{definition}
A topological type-II error in $(Y,d_Y)$ is made when a structure which corresponds to a structure in $(X,d_X)$ is not selected as significant.
\end{definition}
For example, if two clusters in $(X,d_X)$ merge into a single cluster in $(Y,d_Y)$ then we are forced to make a topological type-II error since we can select at most one persistent $H_0$ feature in $(Y,d_Y)$.


\subsection{Selection of Homological Features}
Let $H_k(X;r) = H_k(\calR(X,d_X;r))$ denote the $k$-dimensional homology of the Vietoris-Rips complex at parameter $r$, and $H_k(X)$ denote the $k$-dimensional persistent homology of the Vietoris-Rips filtration.  Similarly, we have $H_k(Y;r)$ and $H_k(Y)$.  We refer to each persistence pair in $H_k(X)$ or $H_k(Y)$ as a homological feature of $(X,d_X)$ or $(Y,d_Y)$ respectively.  We would like to select features of $(Y,d_Y)$ which are in correspondence with features of $(X,d_X)$.  
A simple selection procedure is to compute the interleaving distance $\epsilon = d_{\mathrm{I}}(H_k(\calR(Y, d_Y), H_k(\calR(X,d_X))$, and to select any homology class $(b,d)\in H_k(\calR(Y, d_Y))$ with $|d - b| > 2\epsilon$.  

\begin{proposition}\label{prop:no_typeI_errors}
No type-I errors are made using this selection procedure.
\end{proposition}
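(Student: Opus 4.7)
The core idea is to translate the selection criterion into the language of the bottleneck distance via the isometry theorem, and then exploit the geometry of matchings with respect to the diagonal. A selected pair is too far from the diagonal to be matched there, so it must be matched to an actual feature in $(X, d_X)$, providing the required correspondence.

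First I would invoke the isometry theorem to replace the hypothesis on the interleaving distance with the equivalent statement $\epsilon = \dB(\dgm(H_k(X)), \dgm(H_k(Y)))$. Next, given a selected pair $(b,d) \in \dgm(H_k(Y))$ satisfying $|d-b| > 2\epsilon$, I would pick any $\epsilon'$ with $\epsilon < \epsilon' < |d-b|/2$ (possible by the strict inequality) and use the definition of $\dB$ as an infimum to produce a matching $\Omega \subseteq \dgm(H_k(Y)) \times \dgm(H_k(X))$ whose maximum $\ell^\infty$ cost is at most $\epsilon'$.

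The key geometric observation is that the $\ell^\infty$ distance from $(b,d)$ to the diagonal $\Delta$ equals $|d-b|/2$, which by our choice of $\epsilon'$ strictly exceeds $\epsilon'$. Therefore $(b,d)$ cannot be matched to any point of $\Delta$ under $\Omega$; it must be matched to an off-diagonal point $(b',d') \in \dgm(H_k(X)) \setminus \Delta$. This point $(b',d')$ represents an honest homological feature of $(X,d_X)$, and since the matching realizing the bottleneck distance induces an interleaving sending the homology vector of $(b,d)$ to that of $(b',d')$ (as noted immediately after the isometry theorem), the two features are genuinely in correspondence. By definition this means selecting $(b,d)$ is not a type-I error, completing the proof.

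The only mild obstacle is handling the infimum in the definition of $\dB$: the argument is cleanest when a minimizing matching exists, but for finite Vietoris--Rips filtrations the persistence diagrams are finite (plus diagonal) and the bottleneck minimum is attained, so one may take $\epsilon' = \epsilon$ directly. The $\epsilon'$-approximation argument above simply handles the general case uniformly. No additional structure beyond the isometry theorem and the $\ell^\infty$ distance to the diagonal is required.
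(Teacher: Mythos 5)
Your proof is correct, but it runs on the diagram side of the isometry theorem whereas the paper's proof runs on the module side. The paper takes the $\epsilon$-shift maps realizing the interleaving and factors the structure map $H_k(Y;b)\to H_k(Y;b+2\epsilon)$ through $H_k(X;b+\epsilon)$; since the selected class survives past parameter $b+2\epsilon$ (as $d>b+2\epsilon$), commutativity of the triangle forces its image in $H_k(X;b+\epsilon)$ to be non-zero, and a final sentence checks that distinct selected classes have distinct images. You instead pass to the bottleneck distance via the isometry theorem, take a (near-)optimal matching, and observe that a pair with $|d-b|>2\epsilon$ sits at $\ell^\infty$-distance $|d-b|/2>\epsilon$ from the diagonal $\Delta$, so it must be matched to an off-diagonal point of $\dgm(H_k(X))$. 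Both arguments are sound. Yours has two modest advantages: the matching is a bijection on off-diagonal points, so injectivity of the correspondence comes for free (the paper needs its closing remark for this), and your $\epsilon'$-approximation cleanly handles the infimum in the definition of $\dB$ (the paper implicitly assumes an optimal interleaving exists, which is harmless for finite diagrams but unstated). The paper's argument has the complementary advantage of exhibiting an explicit non-zero homology class in $H_k(X;b+\epsilon)$ as the witness of the correspondence, using only the definition of an interleaving, whereas you must lean on the remark following the isometry theorem that the optimal matching induces an interleaving identifying matched pairs.
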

\begin{proof}
Suppose that a selected homology vector has birth $b$ and death $d$ in $H_k(Y)$ with $|d - b| > 2\epsilon$.
We consider the $\epsilon$-shift maps that realize the interleaving
\begin{equation}
\begin{tikzcd}[column sep=-2em]
	{H_k(Y; b)}\ar[rr]\ar[dr] & {} & {H_k(Y; b+2\epsilon)} \\
	& {H_k(X; b + \epsilon)}\ar[ur]
\end{tikzcd}
\end{equation}
Because $|d-b| > 2\epsilon$ and the above diagram commutes, the selected vector must have a non-zero image in $H_k(X;b+\epsilon)$.

Furthermore, if two selected vectors have the same image in $H_k(X)$ then their difference must be zero in the image back in $H_k(Y)$.
\end{proof}

\begin{proposition}\label{prop:type2_control}
Every persistence pair in $H_k(X)$ with $|d-b|> 4\epsilon$ has a corresponding persistence pair in $H_k(Y)$ which is selected.
\end{proposition}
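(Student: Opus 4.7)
The plan is to reduce the claim to a simple geometric statement about bottleneck matchings via the isometry theorem. By hypothesis, $\epsilon = \dI(H_k(Y), H_k(X)) = \dB(\dgm(H_k(Y)), \dgm(H_k(X)))$, so there exists an optimal matching $\Omega$ (the diagrams are finite, so the infimum is attained) with $\|p - q\|_\infty \leq \epsilon$ for every $(p, q) \in \Omega$.

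Fix a pair $(b, d) \in \dgm(H_k(X))$ with $d - b > 4\epsilon$. Under $\Omega$ it is either matched to an off-diagonal pair in $\dgm(H_k(Y))$ or to a point on the diagonal $\Delta$. The latter option would incur $\ell^\infty$ cost at least $(d - b)/2 > 2\epsilon$, which exceeds $\epsilon$ and therefore contradicts the definition of $\Omega$. Hence $(b, d)$ must be matched to some off-diagonal pair $(b', d') \in \dgm(H_k(Y))$ with $|b - b'| \leq \epsilon$ and $|d - d'| \leq \epsilon$.

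By the reverse triangle inequality, $d' - b' \geq (d - b) - |d - d'| - |b' - b| \geq (d - b) - 2\epsilon > 2\epsilon$, so $(b', d')$ satisfies the selection criterion from Proposition~\ref{prop:no_typeI_errors}. Per the remark following the isometry theorem, the matching $\Omega$ realizes an interleaving that sends the homology vector associated with $(b, d)$ to the homology vector associated with $(b', d')$, which supplies the claimed correspondence between the features.

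The only subtle point is consistency of the interleaving: the $\Omega$ used here should be the same realizer of $\epsilon$ that is implicit in the statement of Proposition~\ref{prop:no_typeI_errors}, so that both propositions refer to a single consistent matching between features. Once this is acknowledged, no further homological input is required; the argument is essentially arithmetic on the birth--death coordinates and could equally well be run directly from the $\epsilon$-shift maps $f_\ast$ and $g_\ast$ by tracking the image of a generator of $(b,d)$ at filtration parameter $b+\epsilon$ in $H_k(Y)$ and observing that it cannot die before $d-\epsilon$ without violating $g_{r+\epsilon} f_r = \iota^X_{r,r+2\epsilon}$.
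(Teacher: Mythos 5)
Your proof is correct, and it reaches the paper's exact intermediate conclusion --- the pair $(b,d)$ corresponds to some $(b',d')\in\dgm(H_k(Y))$ with $b'\le b+\epsilon$ and $d'\ge d-\epsilon$, hence of length $>2\epsilon$ and therefore selected --- but by a slightly different route. The paper argues homologically: it chases the vector associated to $(b,d)$ through the $\epsilon$-interleaving diagram $H_k(X;b)\to H_k(Y;b+\epsilon)\to H_k(Y;d-\epsilon)\to H_k(X;d)$, observes that the length condition $|d-b|>4\epsilon$ forces the image to be nonzero at every stage, and reads off the birth and death bounds for the corresponding pair in $Y$. You instead work entirely on the persistence diagrams: via the isometry theorem you pass to an optimal bottleneck matching (attained because the diagrams are finite), rule out a diagonal match by the $(d-b)/2>2\epsilon$ cost estimate, and do the arithmetic on coordinates. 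The two arguments are equivalent in content --- the paper's remark after the isometry theorem is precisely the bridge you invoke to convert the matching back into a correspondence of homology vectors --- but yours makes the quantitative step more transparent (and correctly flags the attainment of the infimum), while the paper's diagram chase makes explicit which shift maps realize the correspondence, which matters for the ``selection'' semantics shared with \cref{prop:no_typeI_errors}. Your closing caveat about using a single consistent realizer for both propositions is a legitimate point that the paper glosses over.
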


\begin{proof}
Suppose that a homology vector in $H_k(X)$ has birth $b$ and death $d$, with $|d-b|> 4\epsilon$.  Then in the following diagram the associated vector must have non-zero image in each vector space in the $\epsilon$-interleaving
\begin{equation}
\begin{tikzcd}[column sep=-0.7em]
	{H_k(X;b)}\ar[rr]\ar[dr] && {\cdots}\ar[rr]\ar[dr] && {H_k(X; d)} \\
	& {H_k(Y; b + \epsilon)}\ar[rr]\ar[ur] &&{H_k(Y; d - \epsilon)}\ar[ur]
\end{tikzcd}
\end{equation}
Which implies that the persistence pair in $H_k(X)$ is in correspondence with a persistence pair $(b',d')\in H_k(Y)$, where $b'\le b+\epsilon$ and $d'\ge d-\epsilon$.  This pair has length $>(d-\epsilon) - (b+\epsilon) = 2\epsilon$, so is selected by our procedure.
\end{proof}

As a result, not only does this selection procedure guarantee that we will make no topological type-I errors, but we also will not make any type-II errors involving a homological feature of $(X,d_X)$ of sufficient size.  Note that if we lower our threshold for selecting pairs of $H_k(Y)$ 
then we could introduce the possibility of homological type-I errors, and that there is always the possibility of homological type-II errors when neither pair in the correspondence has sufficient length.

\subsection{Application to General Dimension Reduction}

\begin{figure*}[ht!]
        \centering
            \subfigure[Dimension reduction result] 
            {
                \label{subfig:tendril_pca}
                \includegraphics[width=.2\textwidth]{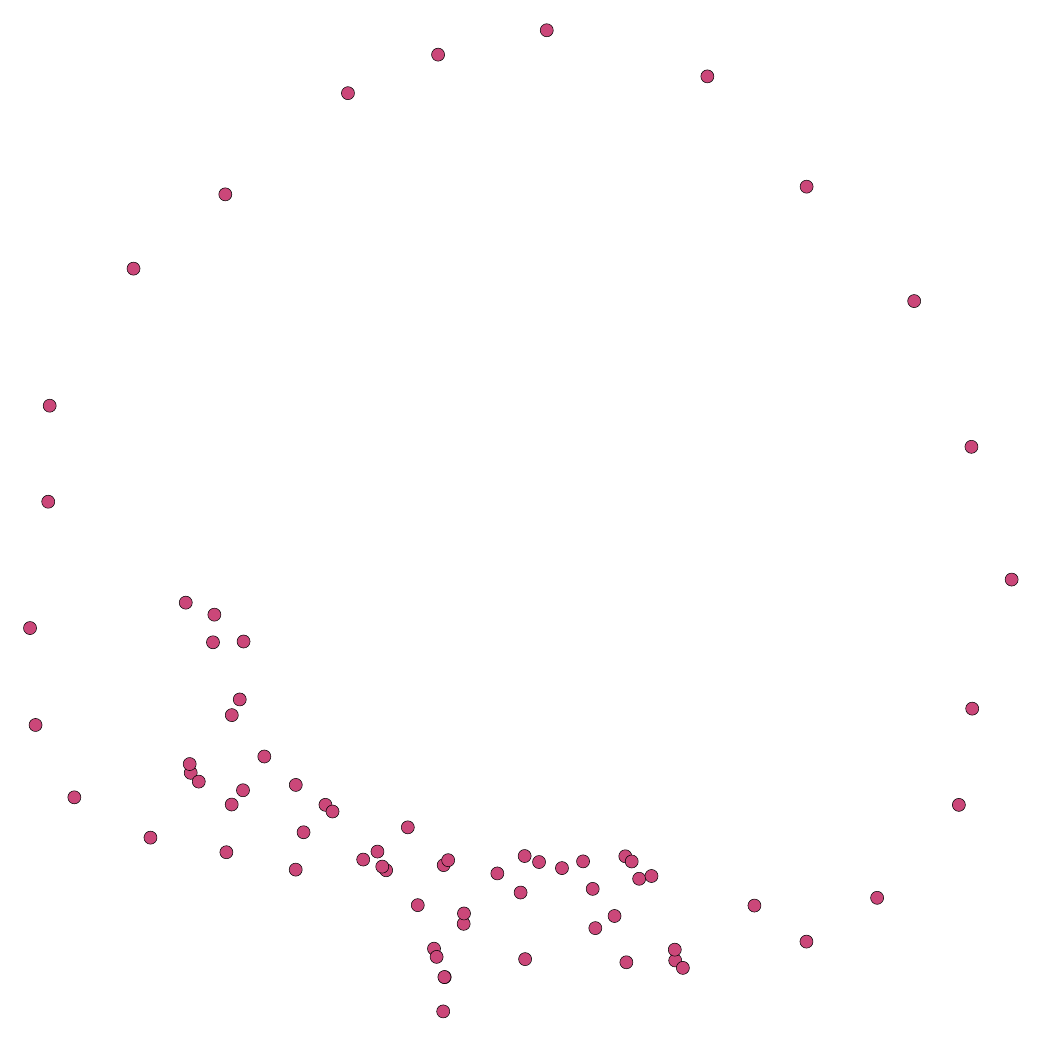} 
            }
            \subfigure[Persistence diagram with unconfident band] 
            {
                \label{subfig:tendril_isomap}
                \includegraphics[width=.2\textwidth]{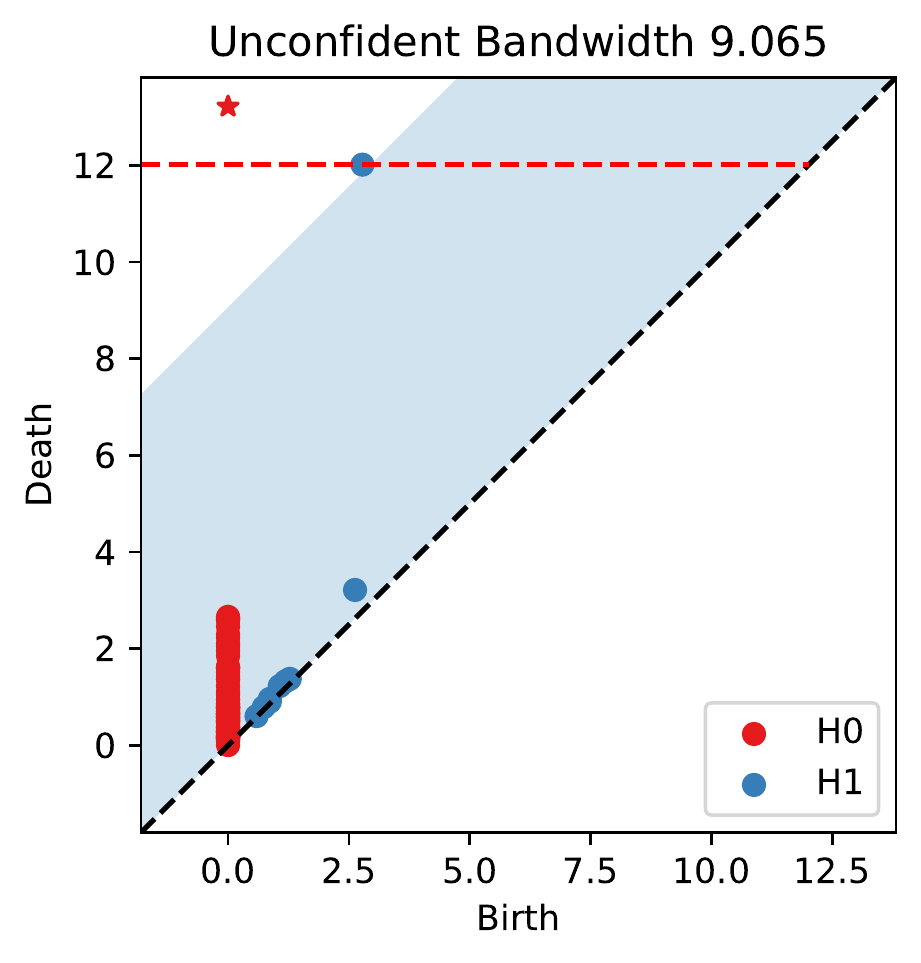} 
            }
            \subfigure[Longest uncertain H1 representative (Blue) and certain H1 representative (Red)] 
            {
                \label{subfig:tendril_pca}
                \includegraphics[width=.2\textwidth]{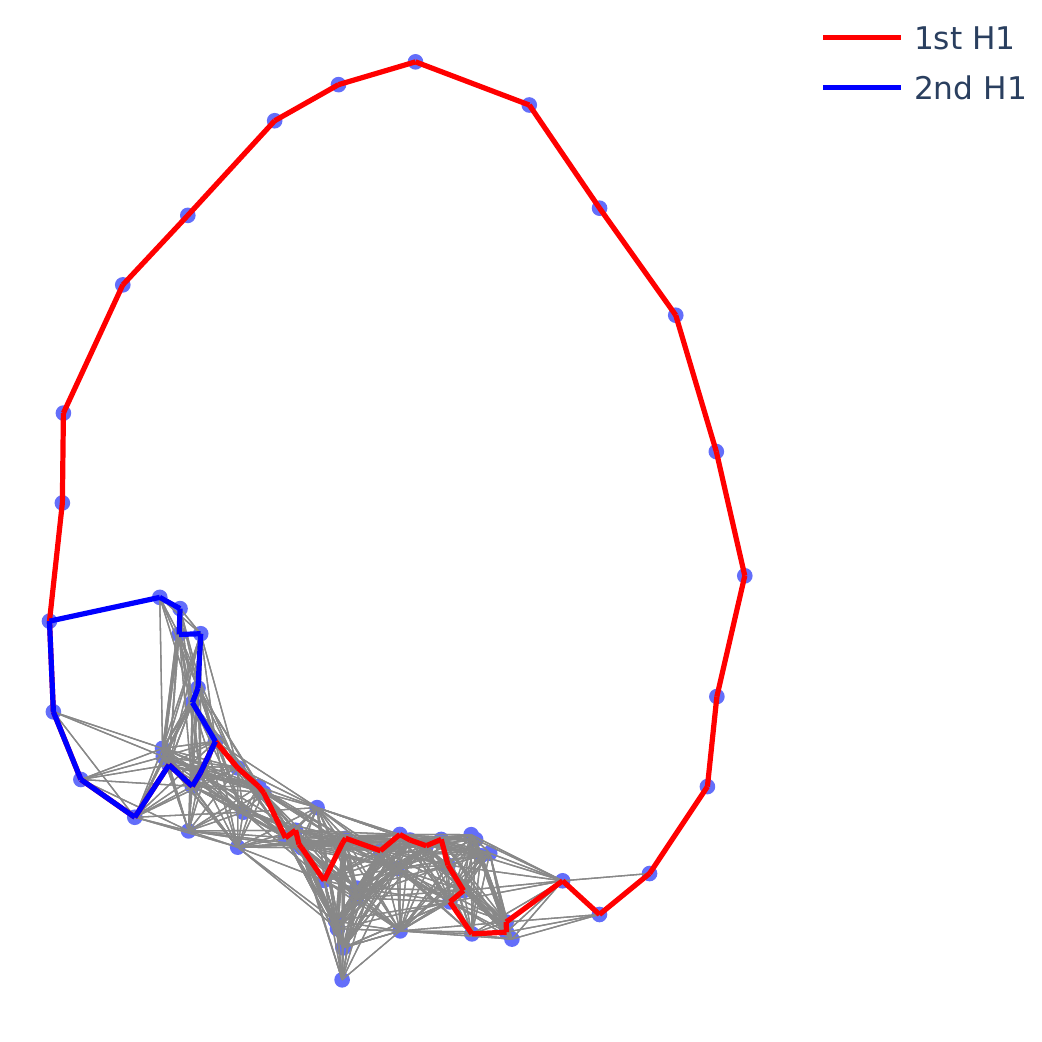} 
            }
        \caption{Dimension reduction results of Tomato dataset by first using PCA to reduce dimension to 10 and then using our PH optimization to reduce to 2.}
        \label{fig:tomato_viz}
\end{figure*}
This selection procedure can be used to assess how well any transformation of point cloud data $X$ preserves topology.  In particular, we can compute the bottleneck distance $\epsilon = d_\mathrm{I}(H_k(X),H_k(Y))$, the number of features of $H_k(Y)$ with $|d-b| > 2\epsilon$ and the number of features of $H_k(X)$ with $|d-b| > 4\epsilon$.  In the context of dimension reduction, it would be desirable to minimize the interleaving distance in order to maximize the number of features we can identify which are in correspondence with features in the original data set.  This is the approach we pursue in \cref{sec:interleaving_opt}.

In table \cref{tab:tomato_selected} we compare several algorithms for dimension reduction on a set of images from the Columbia Object Image Library (COIL-100) \cite{Nene96objectimage} taken of a tomato at various angles in a circle which has a single large $H_1$ homological feature in the original data. Methods compared include PCA, MDS \cite{kruskalMultidimensionalScalingOptimizing1964}, and ISOMAP \cite{tenenbaumGlobalGeometricFramework2000} with a method based on minimizing the bottleneck distance developed in \cref{sec:interleaving_opt}, PH, and a hybrid PH + PCA.  Because every Vietoris-Rips filtration has a single $H_0$ pair with death at $\infty$, at least one $H_0$ feature will always be selected.  Only PH + PCA allows for the selection of an $H_1$ feature.  Visualization of each embedding can be found in the appendix, and a more detailed visualization of the PH + PCA embedding can be found in \cref{fig:tomato_viz}.

\begin{table}[h]
    \centering
    \begin{tabular}{c|c|c|c|c|c}
    Method & max $H_1$ & $\dI$ $H_0$ & $\dI$ $H_1$ & $H_0$ & $H_1$  \\
    \hline
    PCA & 4.854 & 5.148 & 10.852 & 1 & 0\\
    MDS & 5.626 & 4.782 & 10.217 & 1 &0\\
    ISOMAP & 113.968 & 1.935 & 108.623 & 1 &0\\
    PH & 7.543& 5.295 &5.295 & 1 & 0\\
    PH + PCA & 9.234 &4.689 &4.532 & 1 & 1\\
\end{tabular}
    \caption{Selection of topological features.  $\max H_1$ is the length of the largest $H_1$ pair in $Y$. The last two columns indicate the number of features which are in correspondence with the original data in $H_0$ and $H_1$ via the interleaving.}
    \label{tab:tomato_selected}
\end{table}

\begin{figure}[ht]
\centering 
    \includegraphics[width=0.3\linewidth]{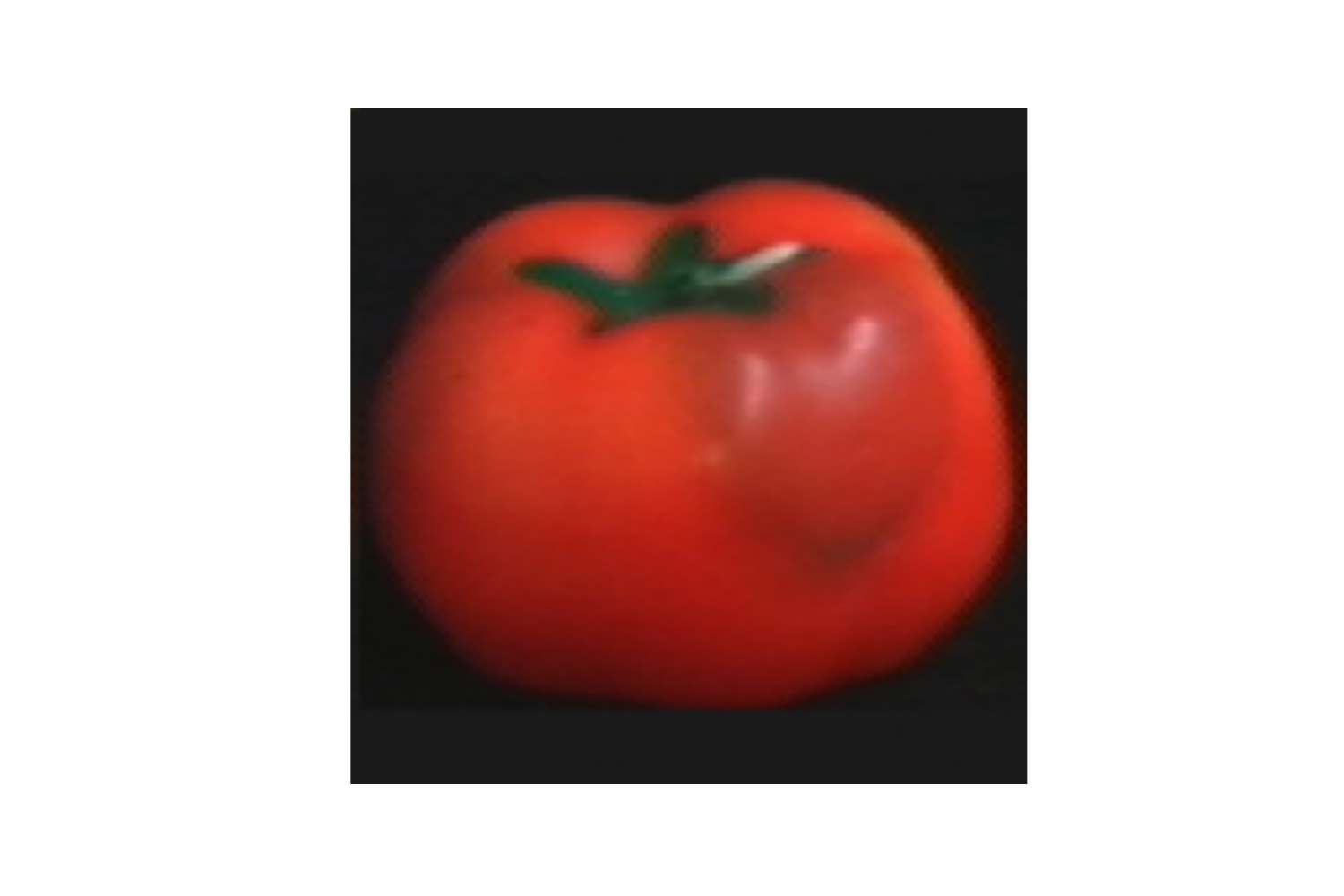}
    \includegraphics[width=0.3\linewidth]{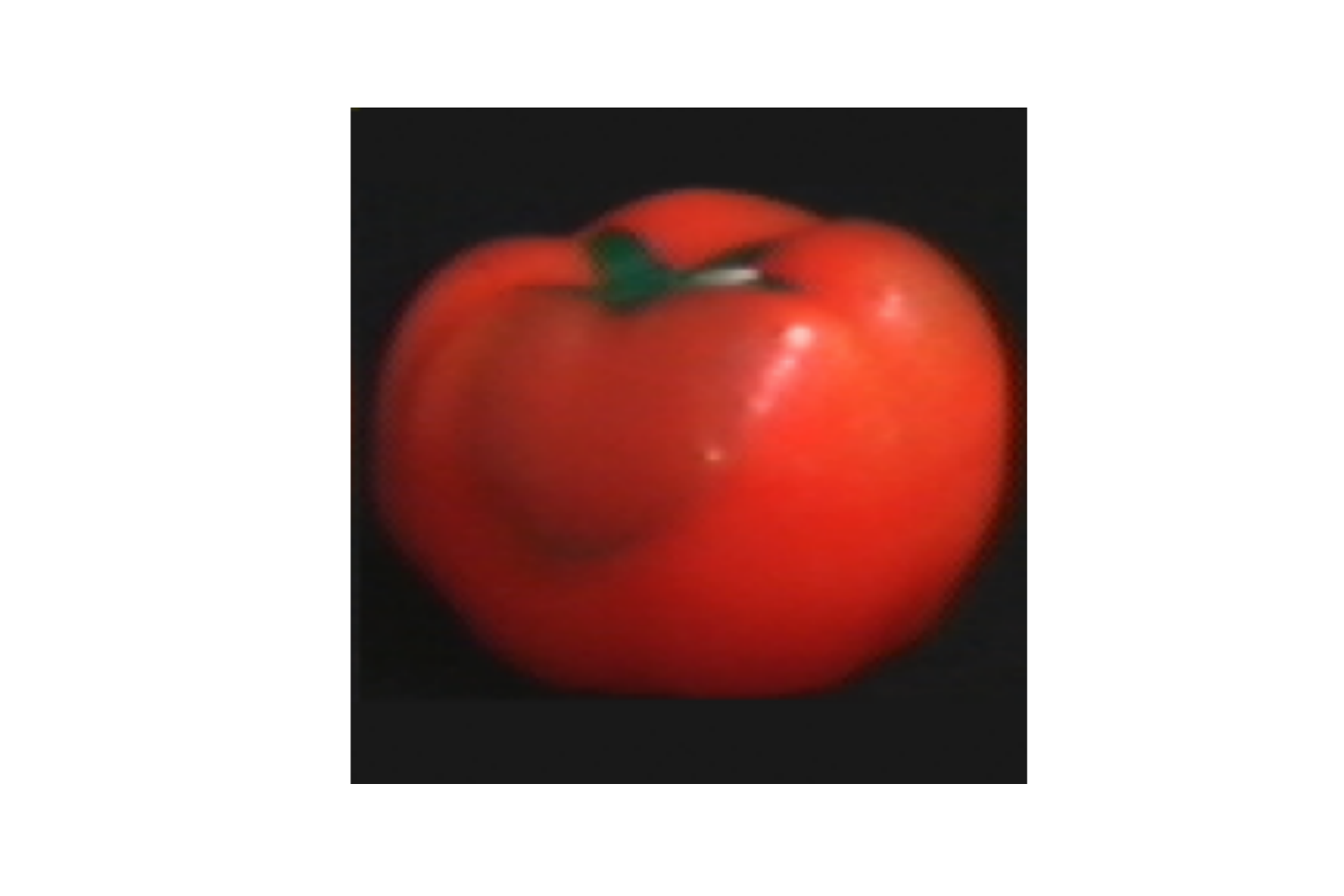}
    \includegraphics[width=0.3\linewidth]{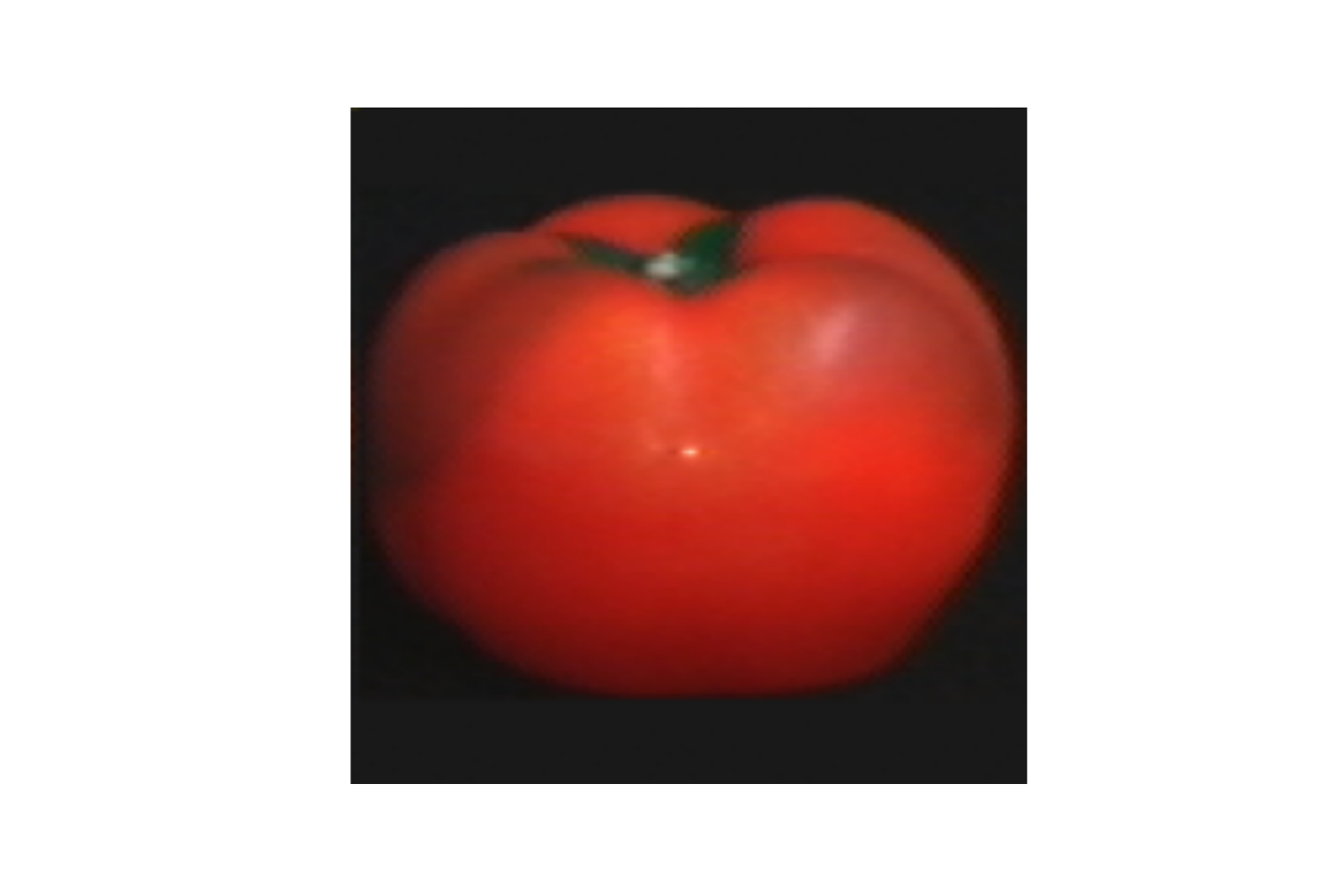}
    \caption{Tomato pictures token from 0, 50, and 100 degree angles. In total, there are 72( = $\frac{360}{5}$) different angles.}
    \vspace{-1.5ex}
    \label{fig:tomato}
\end{figure}

\subsection{Homological Caveats}
Some care must be taken in interpreting the interleaving as presented here.  Importantly, the correspondence between persistence pairs of $H_k(X)$ and $H_k(Y)$ is entirely algebraic; there are not necessarily topological 
shift maps $f:\calR(X; r) \to \calR(Y;r+ \epsilon)$ 
and $g: \calR(Y; r) \to \calR(X;r+ \epsilon)$ 
which induce the interleaving on homology.
Even if the interleaving distance between $X$ and $Y$ is zero, there is no guarantee that there is a natural topological map between the two spaces.  There are many possible spaces which have identical persistent homology \cite{curryFiberPersistenceMap2018}, and to maintain some level of geometric interpretability of persistence pairs of $Y$ in terms of the persistence pairs of $X$, it is desirable to incorporate additional constraints onto the embedding $Y$, as is often the case in dimension reduction algorithms.

\section{Optimizing Interleaving Distance}\label{sec:interleaving_opt}

We now turn to explicitly optimizing an embedding $Y$ to minimize the interleaving distance to the original data set $X$.

\subsection{Optimizing Persistent Homology}\label{sec:opt_ph}
Gradient-based optimization techniques can be applied to persistent homology by backpropagating the gradient of a function of the persistence pairs back to the input values of a filtration.  This is often done by considering a featurization of the persistence pairs such as algebraic functions of the pairs \cite{gabrielssonTopologyLayerMachine2020} or persistence landscapes \cite{kimPLLayEfficientTopological2020,carrierePersLayNeuralNetwork2020a}, but in our situation, we will use the bottleneck distance $\dB(\dgm(X),\dgm(Y))$.

Optimization with Vietoris-Rips filtrations is described in detail in \cite{gabrielssonTopologyLayerMachine2020}, which we summarize here.  The key is that every simplex addition in the filtration either creates or destroys homology and the corresponding birth or death takes that filtration value.  If $f$ is a function of persistence pairs, this allows for the mapping of $\partial f/\partial b$ or $\partial f/\partial d$ to $\partial f/\partial w_\sigma$ where $w_\sigma$ is the filtration value of the unique simplex $\sigma$.  In the case of Vietoris-Rips filtrations, the filtration value of a simplex $(x_0,\dots,x_k)$ is the maximum pairwise distance $d_Y(x_i,x_j)$ where $x_i,x_j$ are vertices in the simplex, so this can then be backpropagated to a gradient $\partial f/\partial d_Y(x_i,x_j)$.  There is a potential issue here which is that a single edge may map to multiple higher-order simplices, but \cref{thm:descent_dir} indicates that this is not generally a problem.   Finally, if we choose a differentiable metric on $Y$ such as the Euclidean metric, we can backpropagate the gradient to point locations in the embedding. 





\subsection{Optimizing the Bottleneck Distance}

We are interested in optimizing the embedding $Y$ to minimize the interleaving distance via the bottleneck distance $\dB(\dgm(Y), \dgm(X))$.  Because the original data set $X$ is fixed, we have a function $f(\dgm(Y)) = \dB(\dgm(Y), \dgm(X))$ which we can fit into the optimization framework for persistent homology.

First, we recall that the bottleneck distance uses a matching between persistence pairs of $Y$ and $X$ and the diagonal $\Delta$ representing potential zero-length pairs, and that the distance is computed from the maximum-weight matching. This leads to three possibilities. 
\begin{enumerate}
\setlength\itemsep{0em}
    \item The maximum weight matching occurs between two non-diagonal pairs.
    \item The maximum weight matching occurs between a pair in $\dgm(Y)$ and the diagonal in $\dgm(X)$.
    \item The maximum weight matching occurs between the diagonal in $\dgm(Y)$ and and a pair in $\dgm(X)$.
\end{enumerate}
The bottleneck distance does not consider matchings between two diagonal points.  In the first two cases, it is possible to find a descent direction.  In the third case, $Y$ is at a local saddle point of the bottleneck distance.

We will show that under the condition that a single pair attains the maximum-weight matching, we may obtain a descent direction for $\dB$ with respect to the point locations in $Y$.  This is a mild condition observed in practice in our empirical experiments.
\begin{proposition}\label{prop:two_pairwise}
If a single pair $(b,d)$ in the maximum-weight matching of $\dgm(Y)$ and $\dgm(X)$ realizes the bottleneck distance, then the backpropagated $\frac{\partial \dB}{\partial d(y_i,y_j)}$ is non-zero for at most two pairwise distances in $Y$.
\end{proposition}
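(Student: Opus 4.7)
The plan is to trace the backpropagation path from $\dB$ through the bottleneck matching down to the pairwise distances in $Y$, and to bound the number of surviving distances at each stage. Since the original data $X$ is fixed, every dependence of $\dB$ on $Y$ factors through $\dgm(Y)$, and by the hypothesis that a single pair $(b,d)\in\dgm(Y)$ matched to some $(b',d')\in\dgm(X)\cup\Delta$ realizes the bottleneck distance, only the two coordinates $b$ and $d$ of this one pair can carry a non-zero derivative; all other birth-death coordinates are held fixed in a neighborhood because they do not attain the $\max$ in \cref{def:bottleneck_dist}.

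Next I would use the $\ell^\infty$ form of the matched cost, namely $\dB=\max(|b-b'|,|d-d'|)$, to further localize the gradient within this single pair. Under the standing uniqueness hypothesis, generically one of $|b-b'|$ or $|d-d'|$ strictly exceeds the other, in which case $\partial \dB/\partial b$ and $\partial \dB/\partial d$ are non-zero for at most one of the two; in the (measure-zero) tied case both can be non-zero, still giving at most two surviving diagram coordinates. This is where the hypothesis is used to rule out the bad situation in which several distinct matching pairs all attain the bottleneck cost simultaneously and each contribute.

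Then I would invoke the Vietoris-Rips backpropagation machinery recalled in \cref{sec:opt_ph}: each of $b$ and $d$ equals the filtration value of a unique creator/destroyer simplex $\sigma_b,\sigma_d$, whose filtration value is by definition the maximum pairwise distance among its vertices. Generically this max is attained by a single edge of the simplex, so $\partial b/\partial d_Y(y_i,y_j)$ is supported on one pairwise distance and similarly for $d$. Chaining the two stages, at most two pairwise distances — the critical edge of $\sigma_b$ and the critical edge of $\sigma_d$ — can receive a non-zero contribution, proving the claim.

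The main delicacy is the non-generic possibility that the maximum pairwise distance inside $\sigma_b$ or $\sigma_d$ is attained by several edges at once, so that the filtration value is a non-smooth $\max$ of several distance functions. This is the ``mild condition'' alluded to in the statement; in that case one passes to the Clarke subdifferential (or simply picks any maximizing edge, as in \cite{gabrielssonTopologyLayerMachine2020}), and a valid backpropagated gradient supported on at most two pairwise distances still exists, which is all the proposition asserts.
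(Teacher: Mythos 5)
Your proposal is correct and follows essentially the same route as the paper: localize the gradient to the single realizing pair, pass from its birth and death coordinates to the two associated simplex filtration values, and then to (at most) one critical pairwise distance each. The paper's own proof is terser---it simply notes the pair has non-zero length so the two simplices map to distinct distances---while you additionally spell out the $\ell^\infty$ case analysis and the tie-breaking for non-unique maximal edges, which is a reasonable elaboration rather than a different argument.
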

\begin{proof}
Because a single pair realizes the distance, it must have non-zero length.  In this case, the associated simplex filtration values must map to distinct distances in $Y$.  Let $q$ denote the matched point in $\dgm(X)$, so $\dB(\dgm(Y), \dgm(X)) = \|(b,d) - q\|_\infty$, one or both distances may admit a non-zero gradient.
\end{proof}
\begin{theorem}\label{thm:descent_dir}
If a single pair in $\dgm(Y)$ realizes the bottleneck distance, then $\dB(\dgm(Y),\dgm(X))$ admits a descent direction on the point embedding $Y$.
\end{theorem}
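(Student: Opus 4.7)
The plan is to reduce the theorem to a case analysis on which coordinate(s) of the $L^\infty$ norm realize the bottleneck matching cost, then exhibit an explicit perturbation of the point locations in $Y$ that strictly decreases $\dB$ in each case. Let $(b, d) \in \dgm(Y)$ be the unique pair realizing $\dB(\dgm(Y), \dgm(X)) = \epsilon$, matched with $q = (b_q, d_q) \in \dgm(X) \cup \Delta$, so that $\epsilon = \max(|b - b_q|, |d - d_q|)$. By the argument establishing \cref{prop:two_pairwise}, $b$ and $d$ are the filtration values of two distinct simplices in the Vietoris-Rips filtration of $Y$, and thus $b = d_Y(y_i, y_j)$ and $d = d_Y(y_k, y_l)$ for two distinct edges (distinct as unordered pairs, since $b \ne d$).

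If $|b - b_q| > |d - d_q|$, I would move $y_i$ infinitesimally along $\pm (y_j - y_i)/\|y_j - y_i\|$, with sign chosen so that $d_Y(y_i, y_j) = b$ moves toward $b_q$. For a sufficiently small step this decreases $|b - b_q|$ at unit rate while keeping $|d - d_q|$ below the new value, so $\dB$ strictly decreases. The case $|d - d_q| > |b - b_q|$ is handled symmetrically by moving $y_k$ along $\pm (y_l - y_k)/\|y_l - y_k\|$. In the case $|b - b_q| = |d - d_q|$, both perturbations are applied simultaneously: because the edges are distinct they share at most one vertex, so a non-shared endpoint can be chosen from each edge, yielding perturbations on disjoint coordinates that simultaneously drive $b \to b_q$ and $d \to d_q$ at unit rate.

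To close the argument, I would invoke continuity of Vietoris-Rips persistent homology in the edge lengths together with the uniqueness hypothesis: since every other pair in $\dgm(Y)$ has matching cost strictly less than $\epsilon$ under the optimal matching, for small enough step sizes the optimal matching is still realized by $(b, d)$, and the perturbed $\dB$ equals $\max(|b(\alpha) - b_q|, |d(\alpha) - d_q|)$, which lies strictly below $\epsilon$ by the construction above.

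The main obstacle is the non-smooth case $|b - b_q| = |d - d_q|$, where $\dB$ is a genuine max of two equal terms and admits a two-element subdifferential; a descent direction there must decrease both coordinate offsets at once. The resolution exploits that $b \ne d$ forces the two responsible Vietoris-Rips edges to be distinct as unordered pairs, so independent motions of the non-shared endpoints provide a direction along which both $|b - b_q|$ and $|d - d_q|$ have strictly negative directional derivatives, hence a common descent direction for $\dB$.
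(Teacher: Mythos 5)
Your proof is correct and follows essentially the same route as the paper's: invoke \cref{prop:two_pairwise} to reduce to at most two responsible edges of the Vietoris--Rips filtration, then perturb non-shared endpoints of those edges to drive the matched birth and death coordinates toward their targets. You are somewhat more careful than the paper in two places — handling the tie case $|b-b_q|=|d-d_q|$ explicitly and adding the stability argument that the optimal matching persists under a sufficiently small step — but these are refinements of the same argument, not a different approach.
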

\begin{proof}
From \cref{prop:two_pairwise}, there are at most two pairwise distances which have a non-zero backpropagated gradient.  Let these distances be $d_Y(y_0,y_1)$ and $d_Y(y_2,y_3)$.  If the points $y_0,y_1,y_2,y_3$ are distinct, then we can simply backpropagate the gradient to each point location.  

One point might possibly participate in both distances -- if two points are redundant, then the distances are not distinct.  In this case, let us have $b = d_Y(y_0,y_1)$ and $d=d_Y(y_1,y_2)$.  In this case, we can take a step in the directions $\frac{\partial \dB}{\partial y_0} = \frac{\partial \dB}{\partial b} \frac{\partial b}{\partial y_0}$, $\frac{\partial \dB}{\partial y_2} = \frac{\partial \dB}{\partial d} \frac{\partial d}{\partial y_2}$, and leave $y_1$ unchanged.  Because $\frac{\partial d_Y(y_0,y_1)}{\partial y_0}$ and $\frac{\partial d_Y(y_0,y_1)}{\partial y_1}$ are non-zero, and at least one of $\frac{\partial \dB}{\partial d}$ or $\frac{\partial \dB}{\partial d}$ is non-zero, this step direction will decrease the bottleneck distance.
\end{proof}

\begin{proposition}\label{prop:db_subgradient}
$\nabla_Y \dB(\dgm(Y), \dgm(X))$ admits a generalized subdifferential.
\end{proposition}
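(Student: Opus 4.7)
The plan is to prove the proposition via two classical ingredients: first, that $\dB(\dgm(Y),\dgm(X))$ is locally Lipschitz as a function of the point coordinates of $Y$, and second, that every locally Lipschitz function on Euclidean space admits a Clarke generalized subdifferential. Existence of the subdifferential then follows immediately; the real work is in establishing local Lipschitzness.

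For the Lipschitz estimate, I would trace the dependence chain
\[
Y \ \longmapsto\ \{d_Y(y_i,y_j)\}\ \longmapsto\ \{w_\sigma\}\ \longmapsto\ \dgm(Y)\ \longmapsto\ \dB(\dgm(Y),\dgm(X)).
\]
Taking $d_Y$ to be a differentiable metric such as the Euclidean one, each pairwise distance is $1$-Lipschitz in the coordinates of $Y$ and smooth away from coincident points. Each Vietoris-Rips filtration value $w_\sigma$ is a maximum of finitely many such distances, and is therefore also $1$-Lipschitz. By the persistence stability theorem, $\dgm$ is $1$-Lipschitz in bottleneck distance with respect to the sup-norm on filtration values, and $\dB(\cdot,\dgm(X))$ is $1$-Lipschitz in its first argument by the triangle inequality implicit in \cref{def:bottleneck_dist}. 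Composing these bounds shows that $Y \mapsto \dB(\dgm(Y),\dgm(X))$ is locally Lipschitz. I would then invoke Rademacher's theorem and define the Clarke subdifferential
\[
\partial^\circ \dB(Y) \;=\; \mathrm{conv}\Bigl\{ \lim_{k\to\infty}\nabla_Y \dB(Y_k) \;:\; Y_k\to Y,\ \dB \text{ differentiable at } Y_k \Bigr\},
\]
which is nonempty, convex, and compact at every $Y$, supplying the required generalized subdifferential.

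For concreteness, and to tie back to \cref{thm:descent_dir}, I would describe $\partial^\circ \dB(Y)$ explicitly using the min-max structure of the bottleneck distance. Writing $\dB$ as a minimum over a finite family of matchings of a maximum over matched pairs of $\ell_\infty$-differences of birth-death coordinates, the elements of $\partial^\circ \dB(Y)$ are convex combinations of single-pair gradients of the form given in \cref{prop:two_pairwise}, indexed over the active (optimal matching, maximizing pair) combinations and propagated to point coordinates via the chain rule of \cref{sec:opt_ph}. The main obstacle is handling the combinatorial discontinuities in the persistence pipeline: the simplex causing a given birth or death, and the optimal matching realizing $\dB$, can switch discontinuously as $Y$ varies. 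Stability ensures the numerical output remains continuous and Lipschitz across these switches, so the Clarke construction applies without modification and the explicit description in terms of active combinations is valid.
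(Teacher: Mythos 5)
Your proof is correct but follows a genuinely different route from the paper's. The paper disposes of this proposition in a single sentence: any persistence pair of $\dgm(Y)$ not participating in the maximum-weight bottleneck matching can be perturbed without changing $\dB$, so the dependence of the objective on $Y$ is concentrated on the one active matched pair (or on nothing at all, in the saddle case where the diagonal of $\dgm(Y)$ is matched). That argument is really a structural observation about the \emph{degeneracy} of the subdifferential --- it is what licenses the subsequent remark that one has ``great freedom to perturb the embedding $Y$'' and to combine the bottleneck objective with secondary objectives --- but as a proof of \emph{existence} of a generalized subdifferential it is quite informal. You instead establish existence by the classical mechanism: the composition $Y \mapsto \{d_Y(y_i,y_j)\} \mapsto \{w_\sigma\} \mapsto \dgm(Y) \mapsto \dB(\dgm(Y),\dgm(X))$ is locally Lipschitz (pairwise distances are Lipschitz, maxima of Lipschitz functions are Lipschitz, the persistence stability theorem controls the diagram in bottleneck distance by the sup-norm on filtration values, and the triangle inequality makes $\dB(\cdot,\dgm(X))$ nonexpansive), so Rademacher's theorem and Clarke's construction yield a nonempty convex compact subdifferential everywhere. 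This is more rigorous and more general than the paper's argument, and your closing description of the Clarke subdifferential as convex combinations of the active single-pair gradients from \cref{prop:two_pairwise} in fact recovers the paper's structural point as a corollary: directions affecting only inactive pairs contribute nothing, which is exactly the perturbation freedom the paper asserts. The only small imprecision is the claimed Lipschitz constant of $1$ for the map from point coordinates to pairwise distances (the constant depends on the norm chosen on the stacked coordinates), but this does not affect local Lipschitzness or the conclusion.
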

\begin{proof}
Any persistence pair in $\dgm(Y)$ that does not participate in the bottleneck matching can be perturbed without affecting the bottleneck distance.
\end{proof}

\cref{prop:db_subgradient} implies that we have great freedom to perturb the embedding $Y$ while minimizing the bottleneck distance.  In the third case above, when the diagonal of $\dgm(Y)$ is involved in the maximum weight matching, we have the freedom to perturb $Y$ in any direction.  This allows for easy optimization of the bottleneck distance in conjunction with a secondary objective.







\section{Experiments}

\begin{figure}
        \centering
                \includegraphics[width=.15\textwidth]{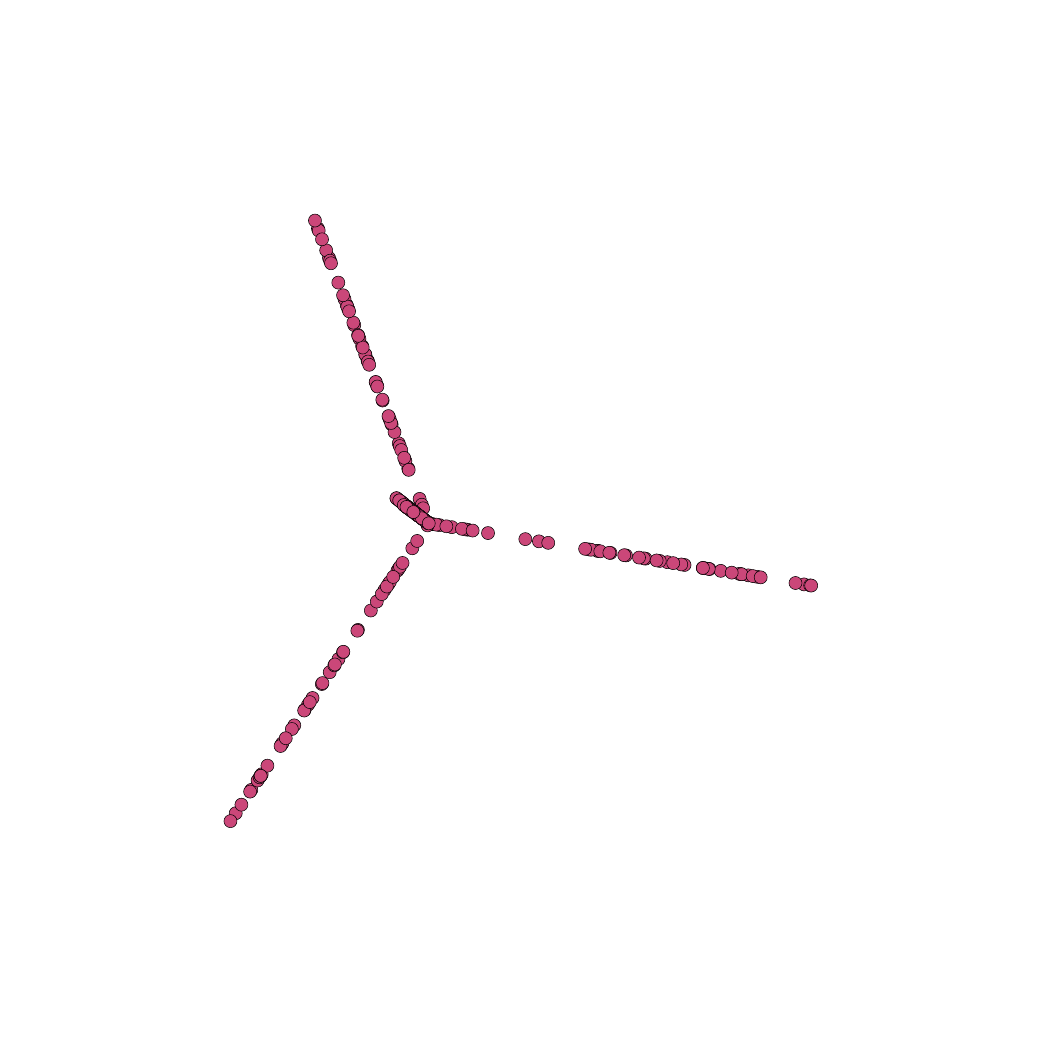} 
                \includegraphics[width=.15\textwidth]{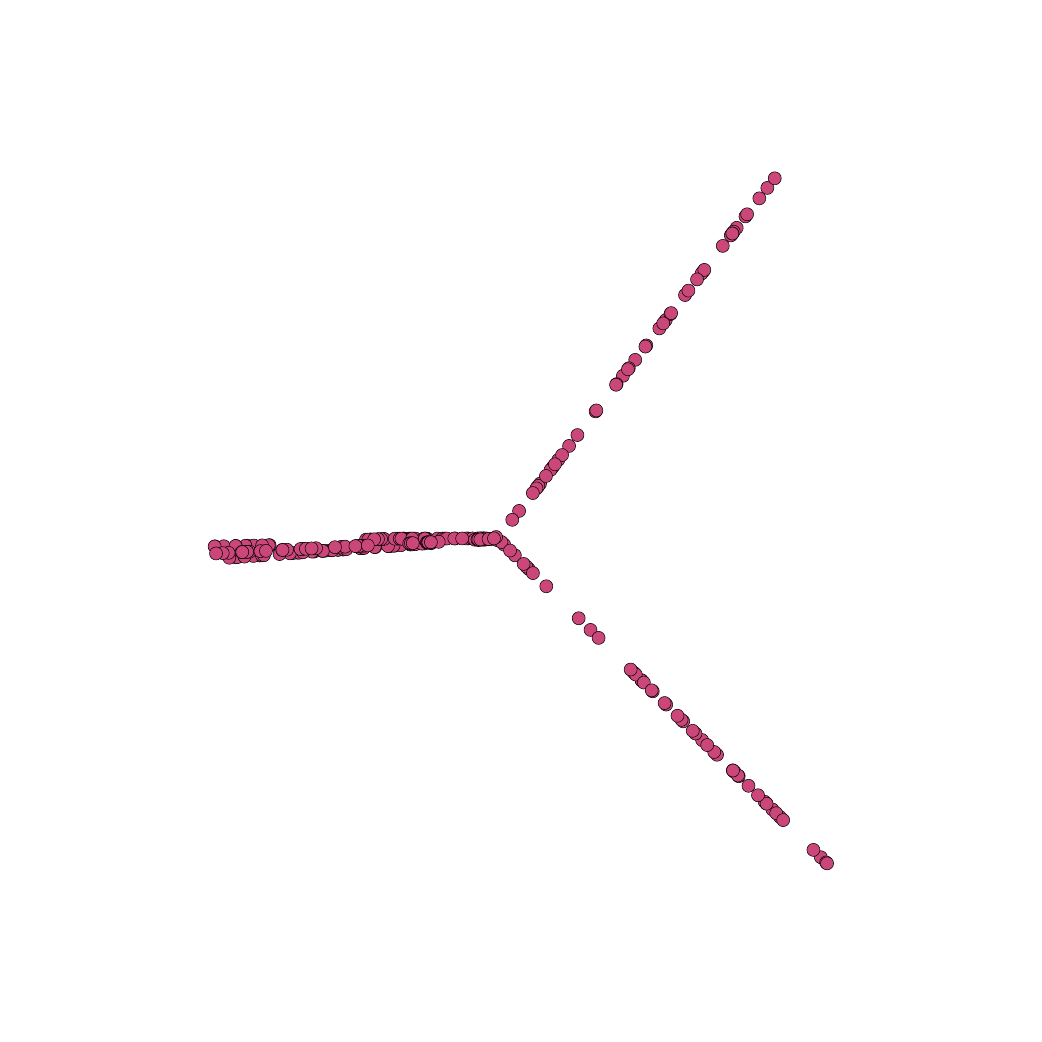} 
                \includegraphics[width=.15\textwidth]{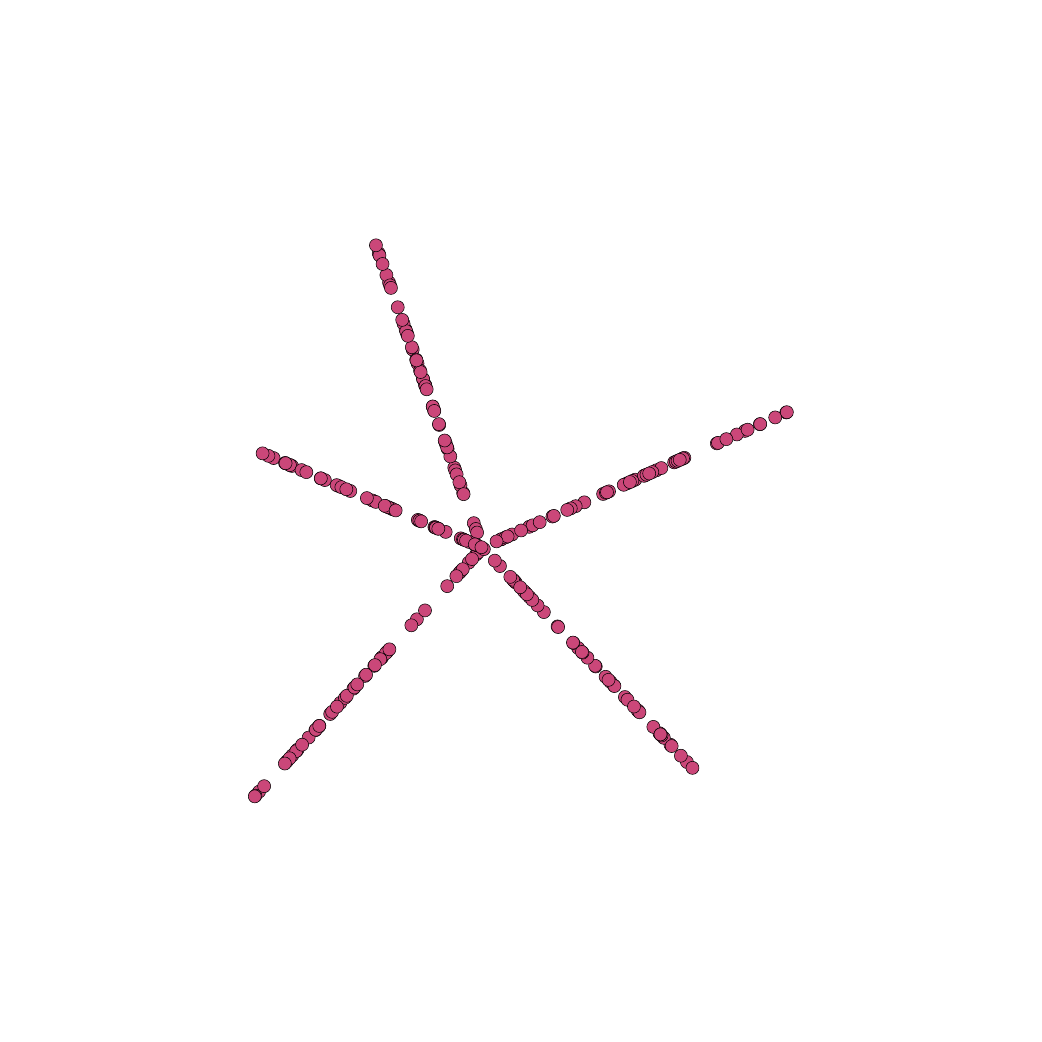}
        \caption{Dimension reduction results of 5-Tendril left to right: PCA, MDS and PH}
        \label{fig:Tendril Result}
\end{figure}

In this section, we will introduce the experiments and results using our dimension reduction method. For convenience, we will call our method PH optimization as it based on persistent homology. 

We implement a form of projection pursuit \cite{friedmanProjectionPursuitAlgorithm1974} which seeks to find a linear projection $P$ which minimizes the bottleneck distance between $X$ and $Y=XP$. We add the orthogonality constraint to the optimization candidate space (also called Stiefel manifold) and the optimization gradient descent algorithm will use the Cayley transform introduced in \cite{Wen2013AFMStifelOpt}.

In our dimension reduction method, since optimization requires repeated computation of persistent homology, controlling the number of points is crucial.
In an effort to obtain a sample with close persistent homology, we use a greedy strategy based on minimizing the Hausdorff distance from the sample to the full point cloud.
However, recall that \Cref{corollary:bottleneck bound} requires Gromov–Hausdorff distance which requires solving quadratic assignment problem. We will bound the Gromov–Hausdorff distance by Hausdorff returned from the greedy sampling algorithm. 

Our procedure is implemented in Pytorch, which supports automatic differentiation without explicitly passing gradients once we have defined two layers: a Vietoris-Rips layer and a bottleneck distance layer. 
The Rips layer based on BATS
\footnote{https://bats-tda.readthedocs.io/}
will compute persistent homology of a Rips filtration and find the inverse map described in \cref{sec:opt_ph}. The bottleneck distance layer is supported by Hera \cite{jea_hera}, which can efficiently find the matching for bottleneck distance. 

\subsection{5-Tendril}
We generate a data set, 5-Tendril consisting of 500 points in 5 dimensions sampled along 5 tendrils, each of which consists of 50 randomly generated points on a canonical basis vector $e_i$ of Euclidean space. Here, our PH optimization method will first sample 100 points 
and then optimize the bottleneck distance on H0 persistent diagram.
We perform 3 different dimension reduction algorithms: PCA, MDS and PH optimization. In \cref{fig:Tendril Result}, the results show that PCA and MDS can only see 3 branches, while PH optimization can see 5.

\subsection{Orthogonal Cycle}
We generate a data set of 500 samples in 5 dimensions with $\binom{5}{2} = 10$ cycles, each of which consists of 50 points and lies in a 2-dimensional plane spanned by two canonical bases $e_i$ and $e_j$ with center $e_i$ + $e_j$ and radius one. For speedup, our PH optimization method samples 200 points and then optimizes bottleneck distance on $H_1$. 

Since the dataset consists of cycles in orthogonal planes and our PH optimization method will also purse an orthonormal projection, there is no way to find a projection that can show and divide all cycles apart within the orthogonality constraint. In \Cref{fig:Cycle Result}, we show the dimension reduction results with 4 different methods: PCA will lead to a tangle where cycles cross with each other, and if without labels, one cannot tell how many cycles exist in this plot; ISOMAP will provide us a five-star with half of orthogonal cycles missed, but without labels, it is hard to determine if there are cycles exists; MDS can provide a 5-ring, but some of are not comprised of a single orthogonal cycle, for example, the yellow orthogonal cycle lies on two rings; our PH method can provide 3 cycles and the three straight lines between can indicate the collapse of orthogonal structure in the projection.

\begin{figure*}[ht!]
        \centering
                \includegraphics[width=.24\textwidth]{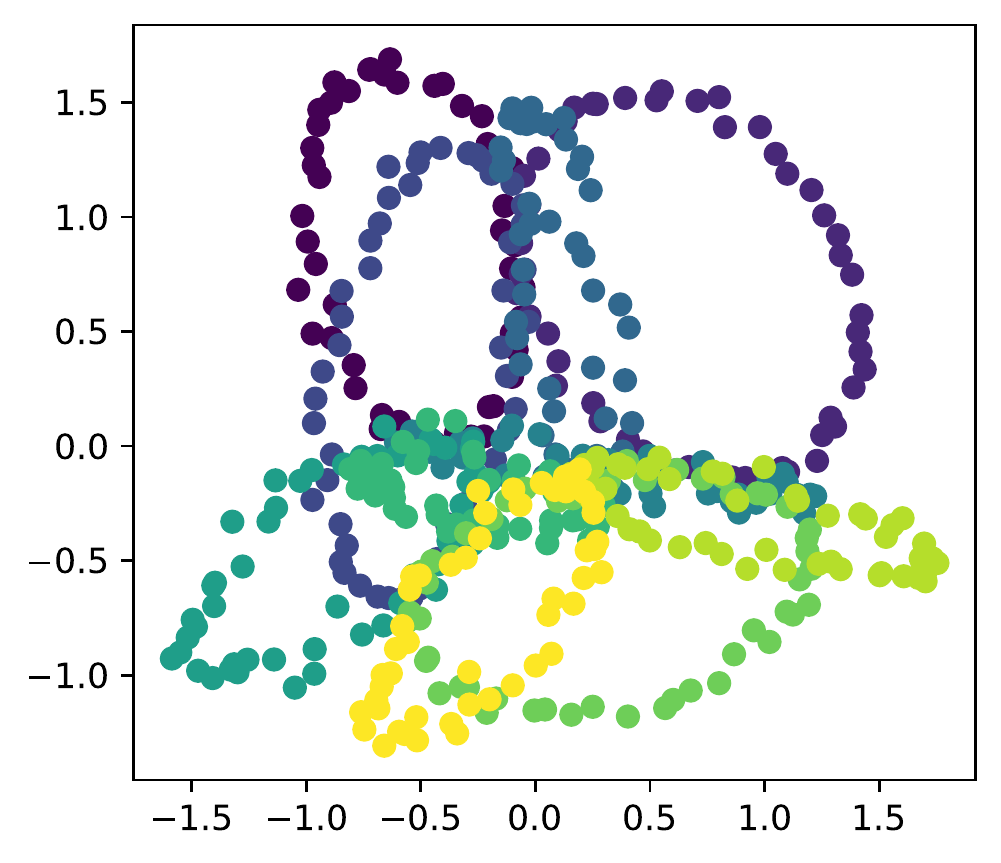}
                \includegraphics[width=.21\textwidth]{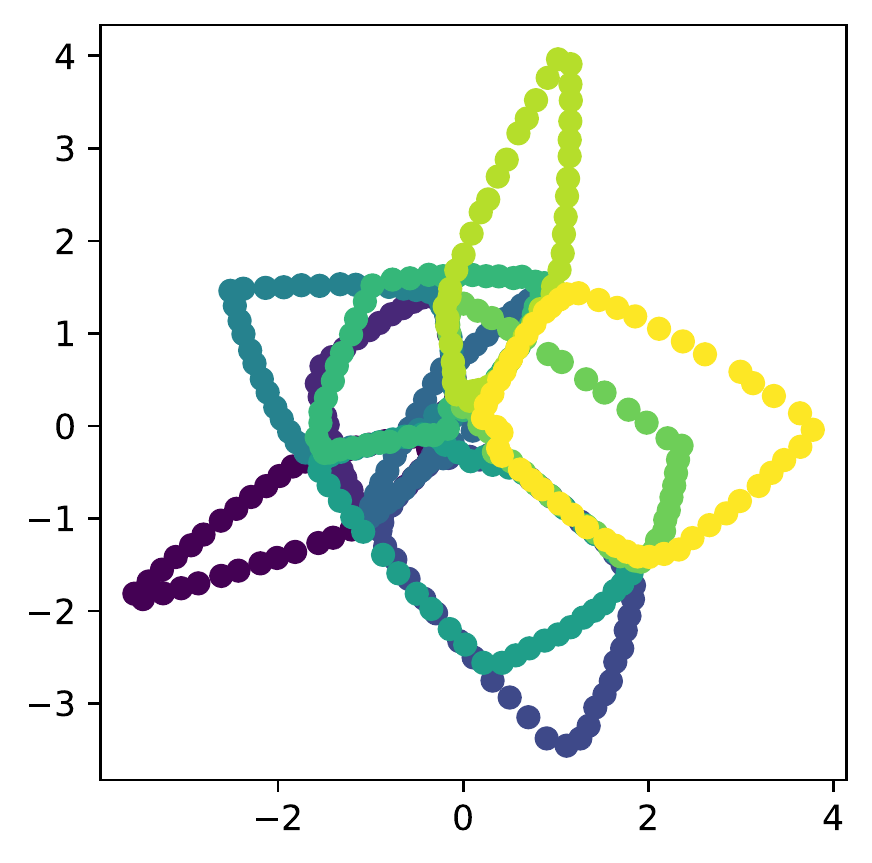} 
                \includegraphics[width=.2\textwidth]{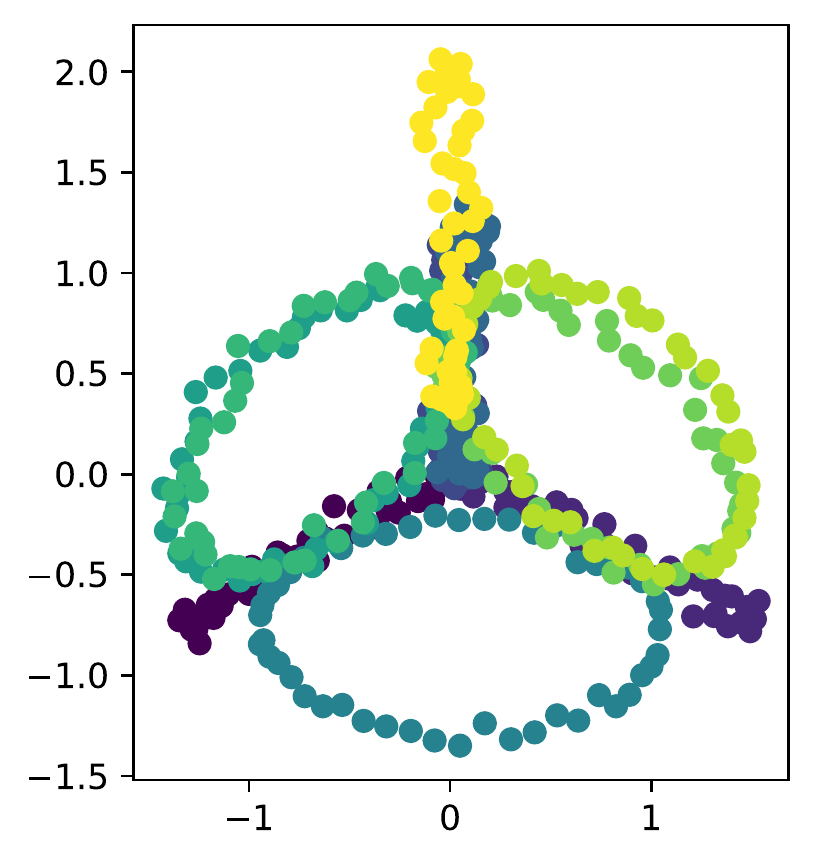}
                \includegraphics[width=.215\textwidth]{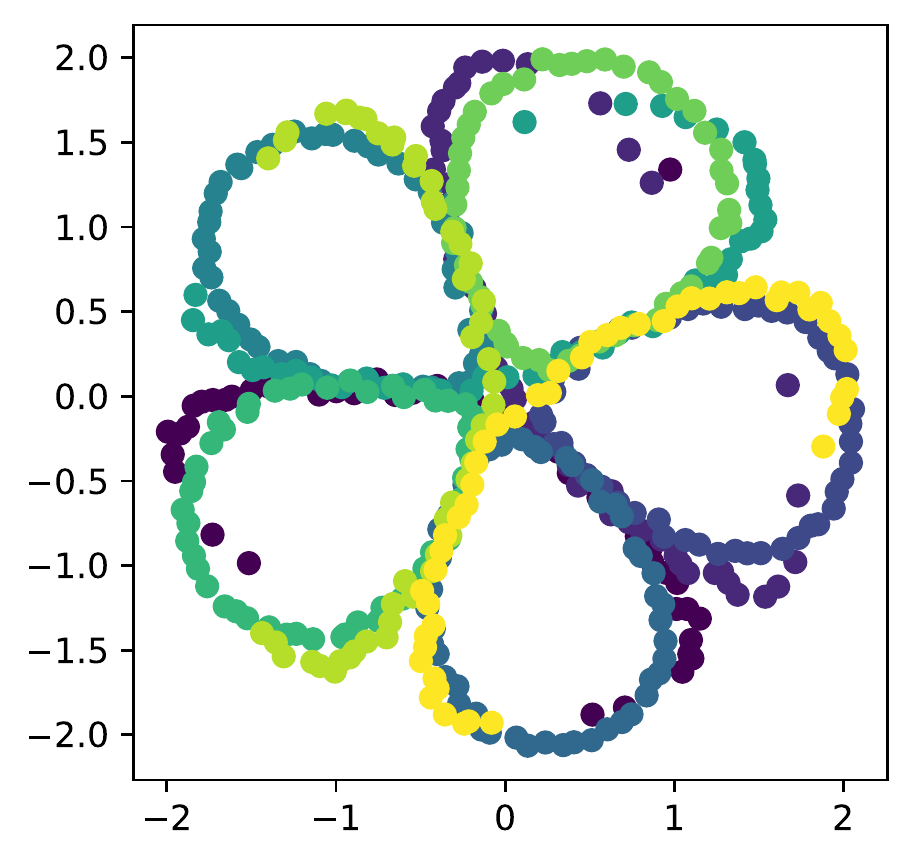}
        \caption{Dimension Reduction Results of Orthogonal cycles data set.  Left to right: PCA, MDS, ISOMAP and PH}
        \label{fig:Cycle Result}
\end{figure*}

\subsection{COIL-100}
The Columbia Object Image Library (COIL-100) \cite{Nene96objectimage} dataset contains 7200 colorful images of 100 objects, where each object has 72 $128\times128$ images with 3 color channels taken at pose intervals of 5 degrees. We pick the tomato dataset (see \Cref{fig:tomato}) with shape $72 \times 49152(= 128*128*3)$ from COIL-100 and perform our PH optimization algorithm on both $H_1$ and $H_0$ persistent diagrams. Although the number of points is only 72, the high feature dimension makes the direct running of our procedure on Pytorch prohibitive due to the memory issue of Cayley transform. To solve the problem, we devise two procedures: a) Optimizing the projection directly using the Cayley transform to handle orthogonality; b) first we perform PCA on the tomato dataset to reduce dimension to 10 and then keep reducing the dimension to 2 by our Pytorch PH optimization algorithm, denoted by PH+PCA. 

We include visualization of the results using methods PH, PH+PCA, PCA, ISOMAP, MDS into the Appendix and show the result of PH+PCA in \cref{fig:tomato_viz}. In \Cref{fig:tomato_viz}a), we can see that the 72 points in $\mathbb{R}^2$ constitute a great circle, which demonstrate that the dataset consists of pictures taken at 360 degrees at pose interval 5 around the tomato. In \Cref{fig:tomato_viz}b), we draw the persistence diagram of the projected data set and a unconfident band with width 9.065, which is the twice of bottleneck distance between the original and projected tomato dataset. By the discussion in \Cref{section: Measuring Embedding Distortion through Interleavings}, there is only one H1 class in the projected data that we can ensure also exists in the original one. In \Cref{fig:tomato_viz}c), we then draw the certain H1 representative in red and the longest uncertain H1 representative in blue.

\subsection{Natural Image Patches}

Natural image patches are a well-studied data set with interesting topological structures at various densities \cite{carlssonLocalBehaviorSpaces2008}.  We follow the data processing procedure of \cite{leeNonlinearStatisticsHighContrast2003} to sample $3\times 3$ patches from the van Hateren natural images database \cite{vanhaterenIndependentComponentFilters1998}.  We further refine a sub-sample of 50,000 patches using the co-density estimator of \cite{desilvaTopologicalEstimationUsing2004} with $k=5, p=40\%$ to obtain a data set of 20,000 patches which resembles the ``three-circle'' model of \cite{desilvaTopologicalEstimationUsing2004}.  

In \cref{fig:image_patches} we apply our procedure to two initial projections.  In both cases, we use a greedy subsampling of 100 points in the data which contains 5 robust $H_1$ pairs, agreeing with the three-circle model.  We first initialize the projection with the first two principal components of the data and then refine by optimizing the bottleneck distance on $H_1$ pairs.  The initial projection onto principal components displays a clear ``primary circle,'' with the two secondary circles projecting onto two chords.  Our procedure decreases the $H_1$ bottleneck distance on the 100 sampled points from $5.6$ to $4.7$ but there is minimal visual difference between the two projections, indicating that the initialization was near a local optimum.

\begin{figure}[h]
    \centering
    \includegraphics[width=0.45\linewidth]{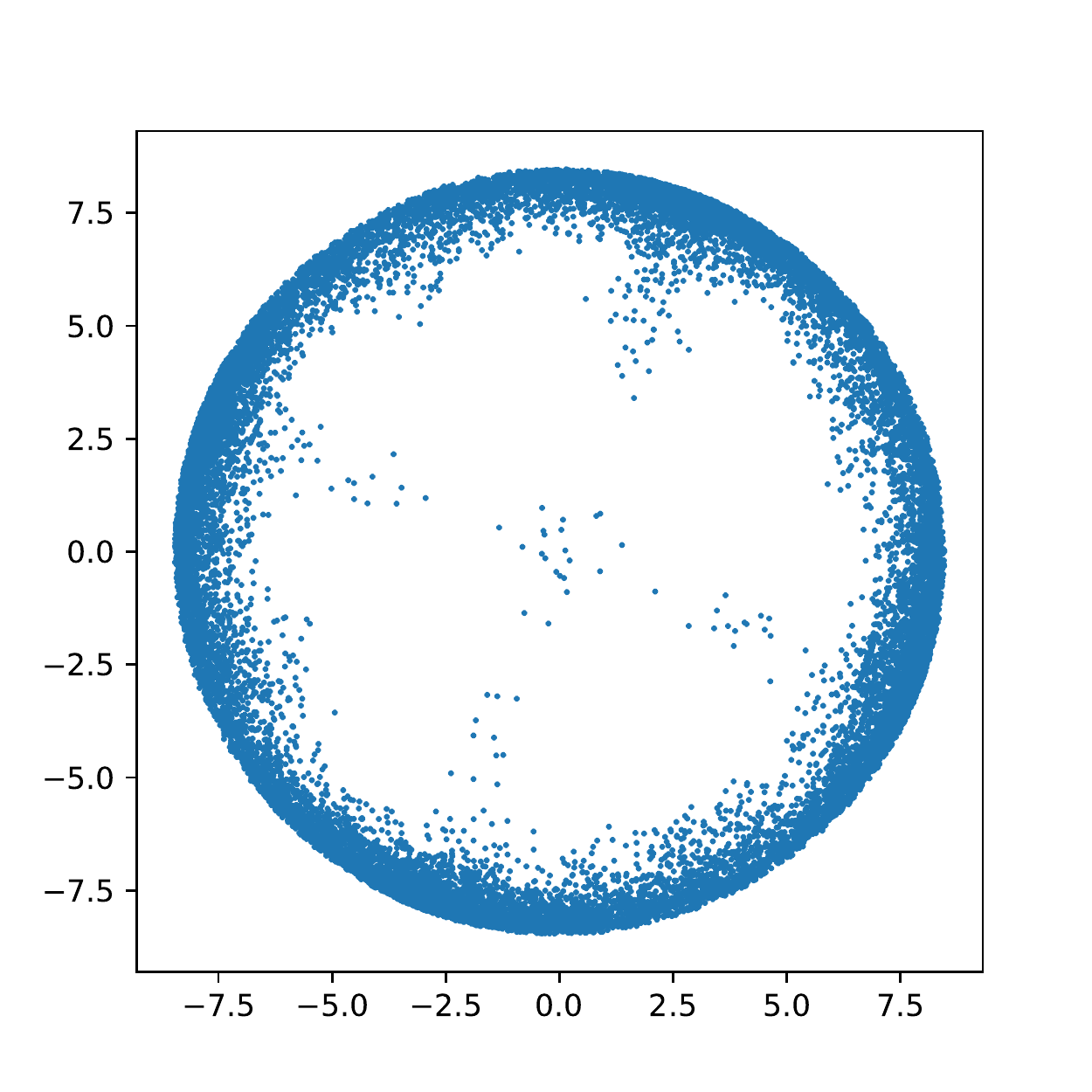}
    \includegraphics[width=0.45\linewidth]{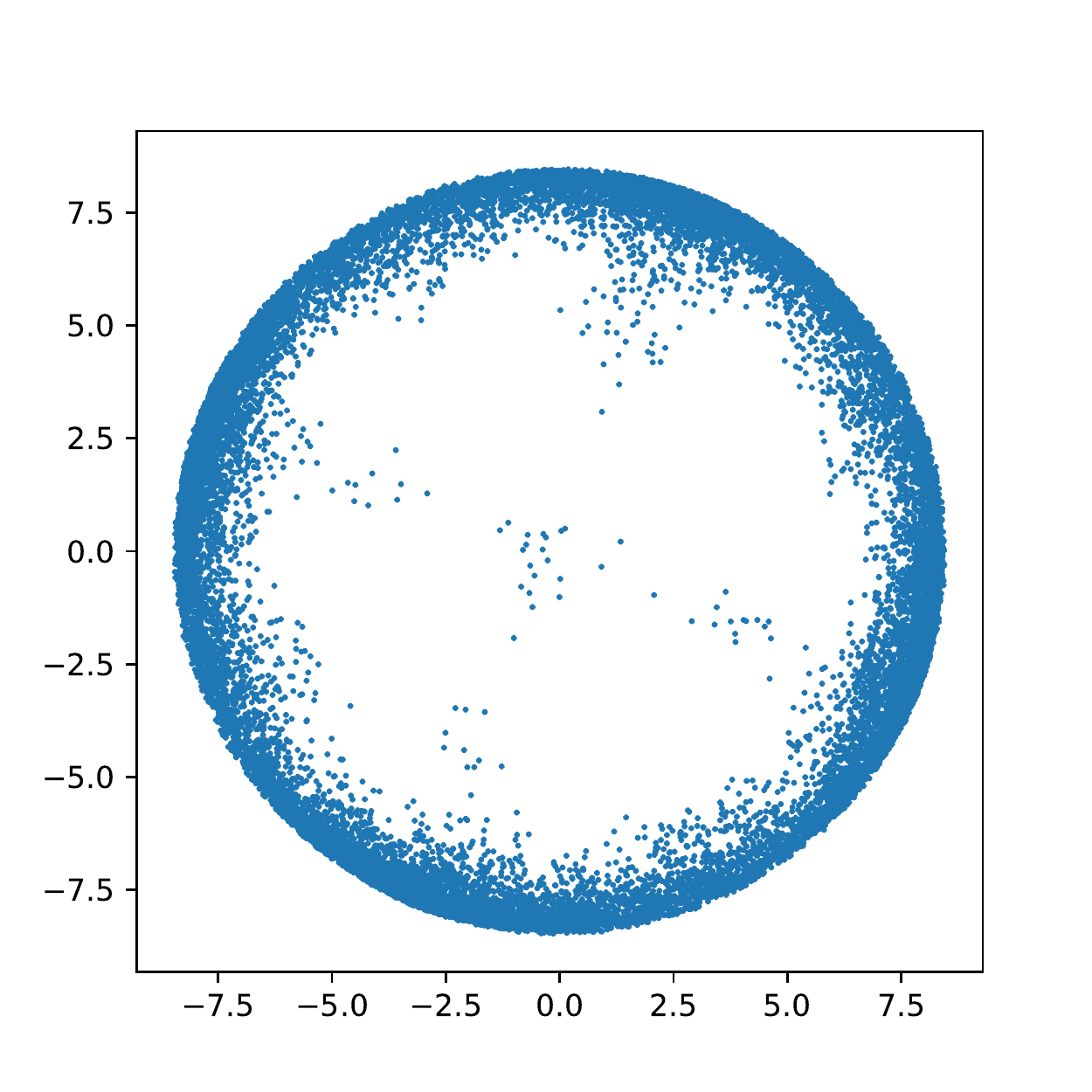}\\
    \includegraphics[width=0.45\linewidth]{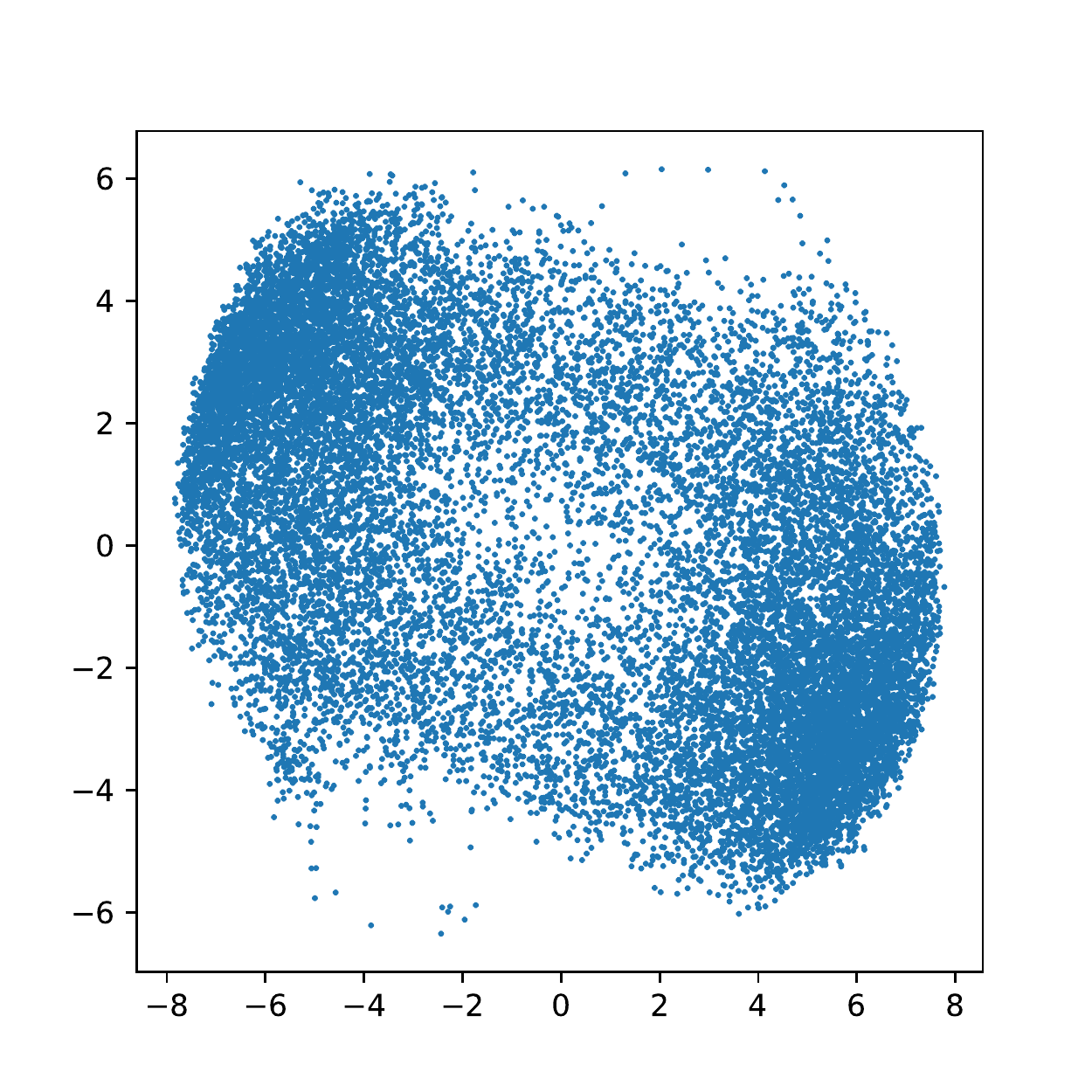}
    \includegraphics[width=0.45\linewidth]{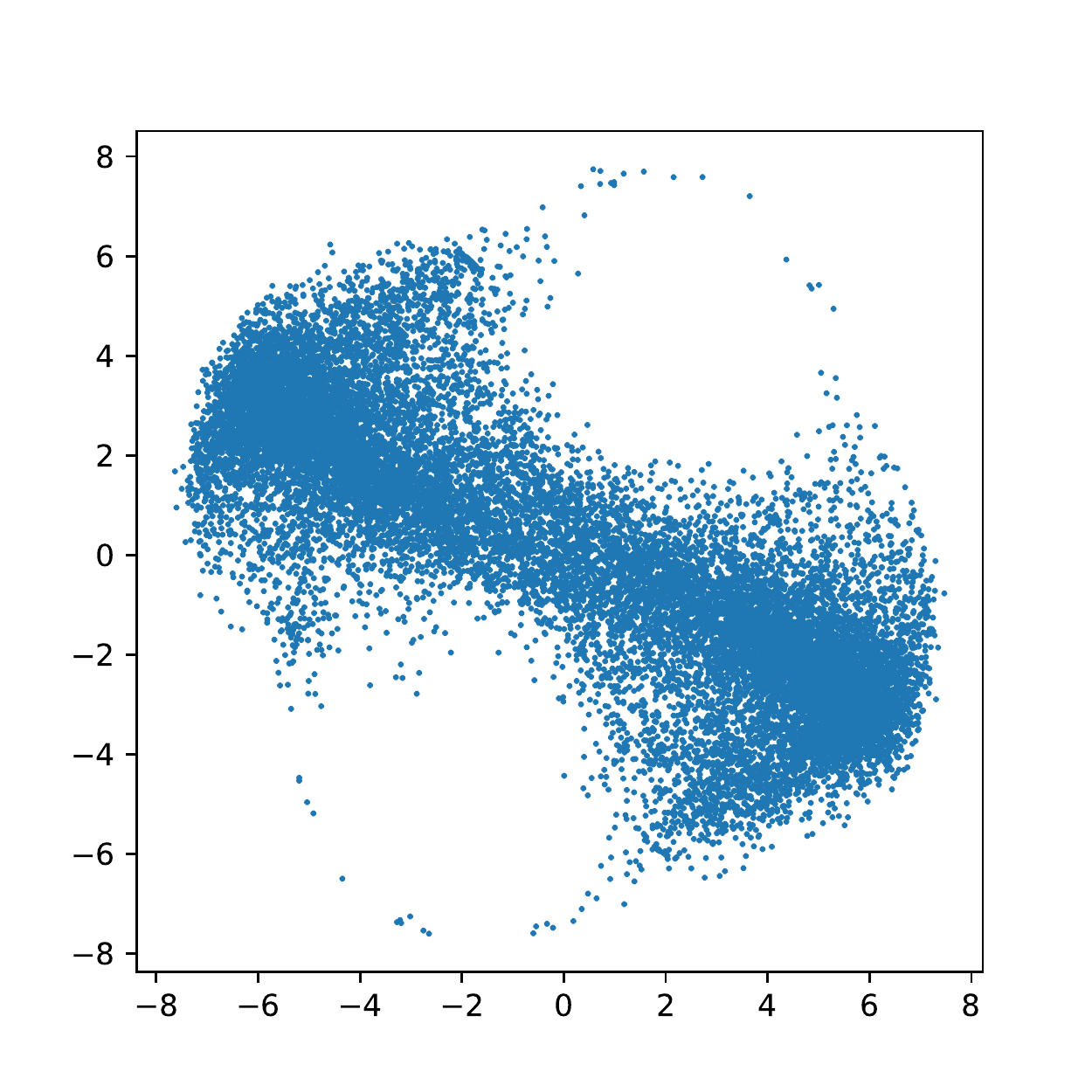}
    \caption{Projections of a $3\times 3$ image patch data set.  Top row, left to right: principal component embedding, and further refinement from minimizing $H_1$ bottleneck distance.  Bottom row: random projection and further refinement from minimizing $H_1$ bottleneck distance.}
    \vspace{-1.5ex}
    \label{fig:image_patches}
\end{figure}

The second row of \cref{fig:image_patches} starts with a random projection into two dimensions.  We again optimize to minimize the bottleneck distance on the $H_1$ pairs, and decrease the bottleneck distance on the sampled points from $7.9$ to $3.7$.  In this case, there is a noticeable visual difference between the two projections.  In the first projection, a noisy projection of the primary circle is visible, and in the second projection, we see a clear visualization of one of the two secondary circles, with the primary circle and other secondary circle collapsed to a chord.

In both these experiments, our $H_1$ bottleneck distance bounds do not allow us to confidently select any features in the visualization.  As with the orthogonal cycle data, the reason is fundamental: the three circle model embeds into a Klein bottle \cite{carlssonLocalBehaviorSpaces2008}, and any projection to fewer than 4-dimensions (necessary to embed the Klein bottle) will result in spurious intersections of at least two of the three circles.  Despite these limitations, our method is still able to present a subset of the important $H_1$ features of this data.

\section{Discussion}
In this paper, we propose the use of the interleaving distance in dimensionality reduction.  We show that this distance can be used to identify topological features in correspondence between a full data set $X$ and a low dimensional embedding $Y$ using any dimension reduction procedure.  We also demonstrate how optimization of the equivalent bottleneck distance can increase the significance of important topologcial features in $X$ in the embedding $Y$.

We incorporate bottleneck distance optimization into projection pursuit and find that our method can preserve topological information when projecting from high dimensional spaces to two dimensions for visualization.  We find in several cases, our method will focus on visualizing a subset of the important topological structures as orthogonality of subspaces in the full data set prohibit the visualization of all structures using a single projection.  

Our method could be combined with other optimization objectives such as the maximization of variance in the projection as in PCA.  One limitation of our method is that the bottleneck distance is non-smooth and has many local minima.  Hybrid schemes which combine bottleneck distance optimization with other objectives may generally help with optimization.

One direction of future work that could help improve the ability to detect if topological features in embeddings has correspondence with the original data set would be to develop interleaving techniques based on non-linear shift maps.  Because the interleaving distance focuses on the worst possible distortion between persistent homologies, a more fine-grained analysis may reveal that more information is preserved in practice.

\section*{Acknowledgements:} BN was supported by the Defense Advanced Research Projects Agency (DARPA) under Agreement No.
HR00112190040. 

\bibliographystyle{acm}  
\bibliography{references}

\appendix
\onecolumn
\section{Appendix}
\subsection{Additional Dimension Reduction Results}
\begin{table}[ht] 
\centering 
\begin{tabular}{  >{\raggedright}m{2cm} m{5cm}  m{5cm} }      
\toprule                                   
Method & Bottleneck Distances (B0, B1) & Dimension reduced data \\  
\midrule\addlinespace[1.5ex]
        PH + PCA
        & \centerline{4.689, 4.532}
        & \includegraphics[height=1.7in]{images/tomato/tomato_PH.pdf} \\
\midrule\addlinespace[1.5ex]
        PH 
        & \centerline{5.295, 12.661}
        & \includegraphics[height=1.7in]{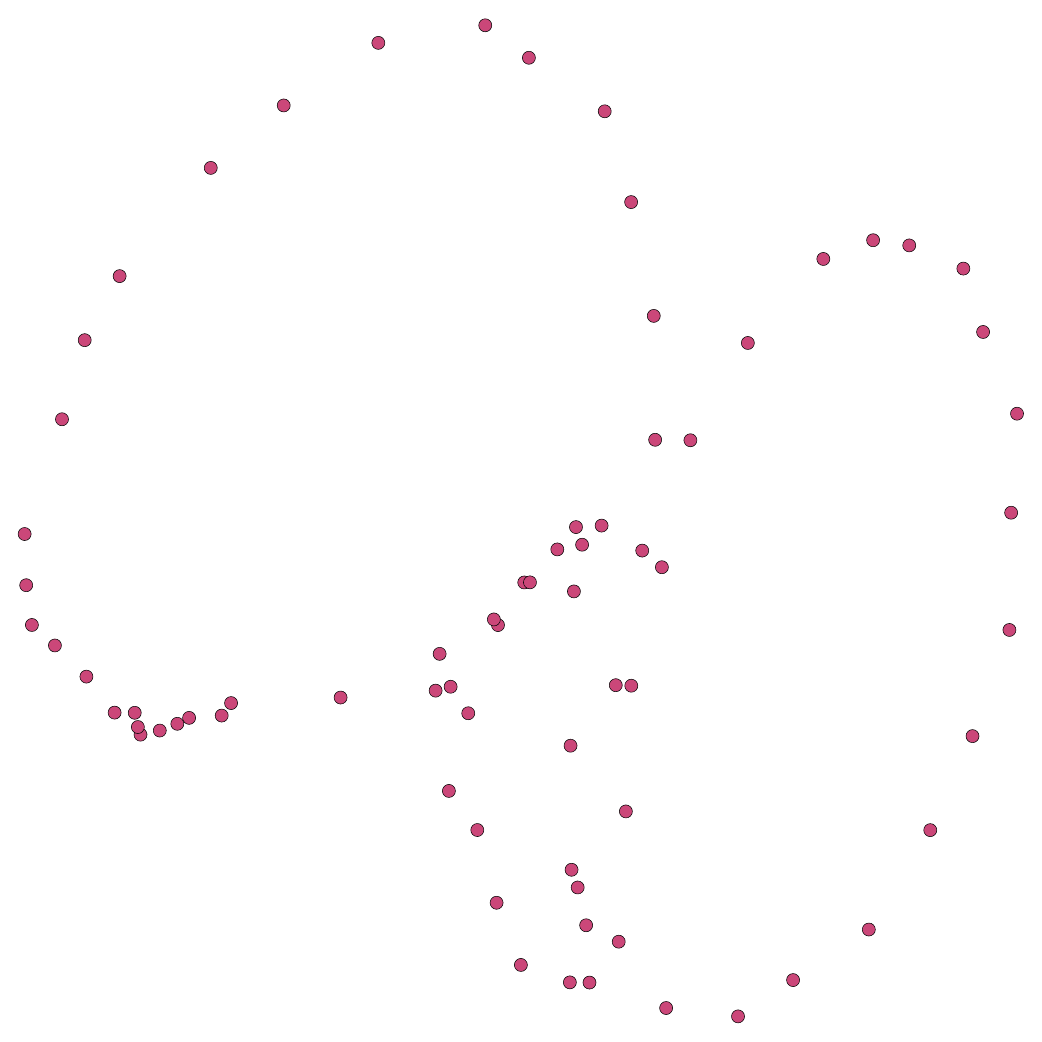} \\
\midrule\addlinespace[1.5ex]
        PCA
        & \centerline{5.148, 10.852}
        & \includegraphics[height=1.7in]{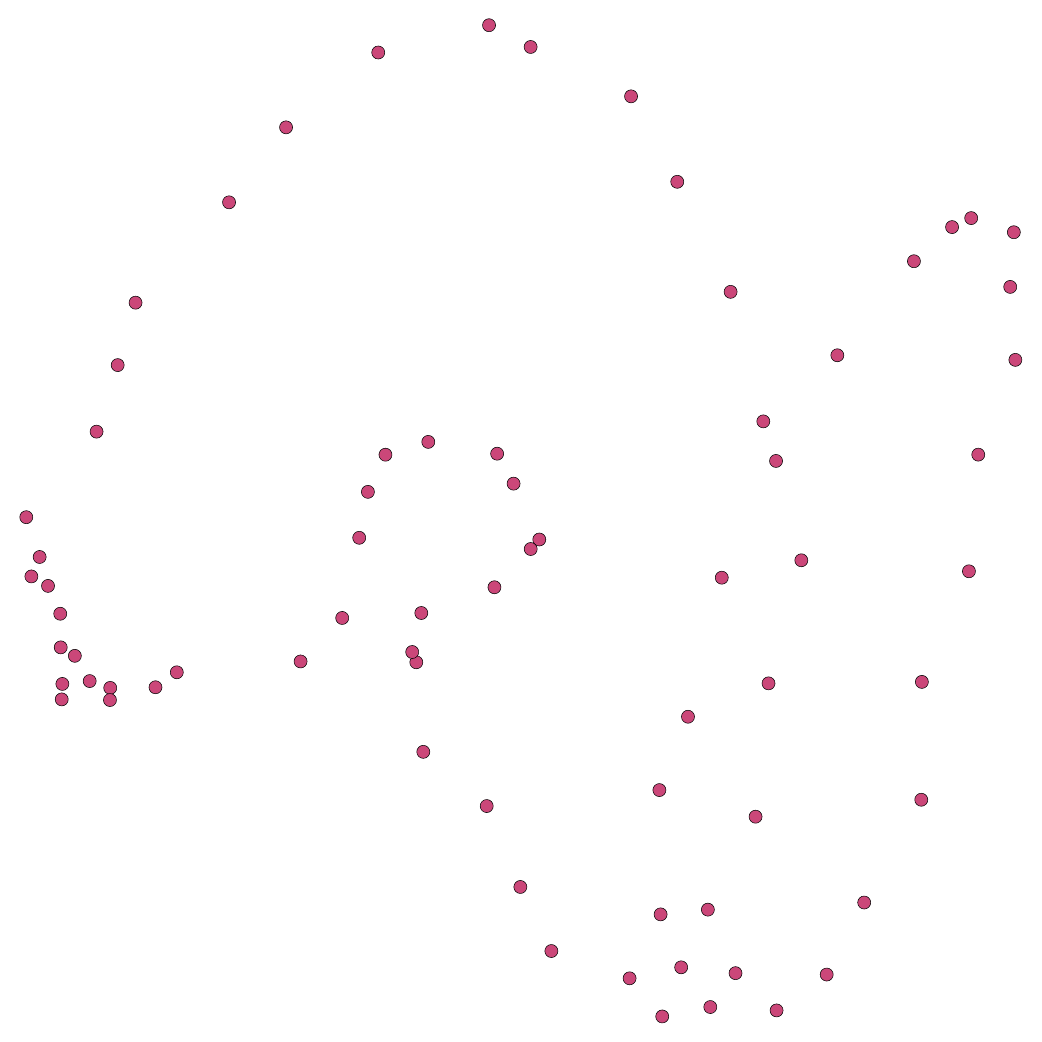}\\
\midrule\addlinespace[1.5ex]
        ISOMAP
        & \centerline{1.935, 108.623}
        & \includegraphics[height=1.7in]{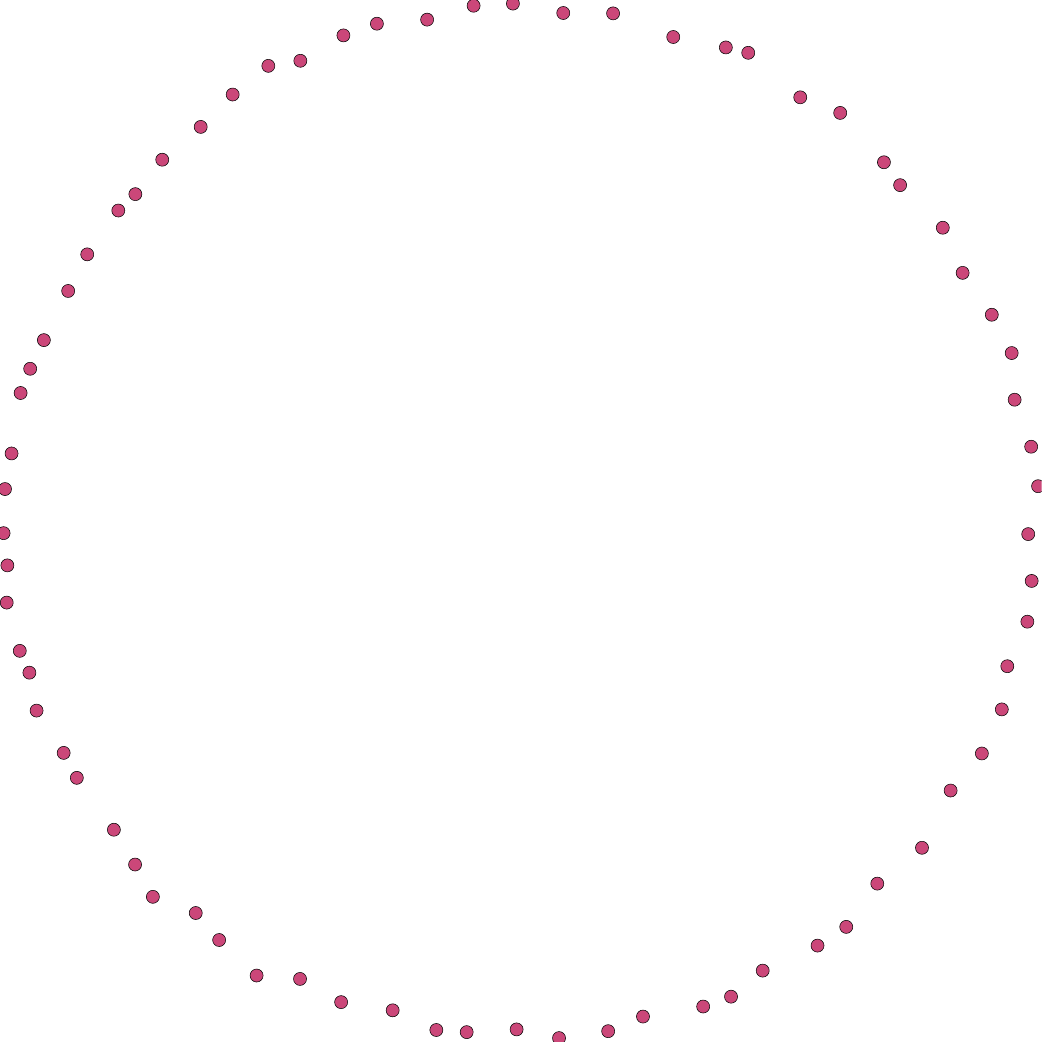} \\
\midrule\addlinespace[1.5ex]
        MDS
        & \centerline{4.782, 10.217}
        & \includegraphics[height=1.7in]{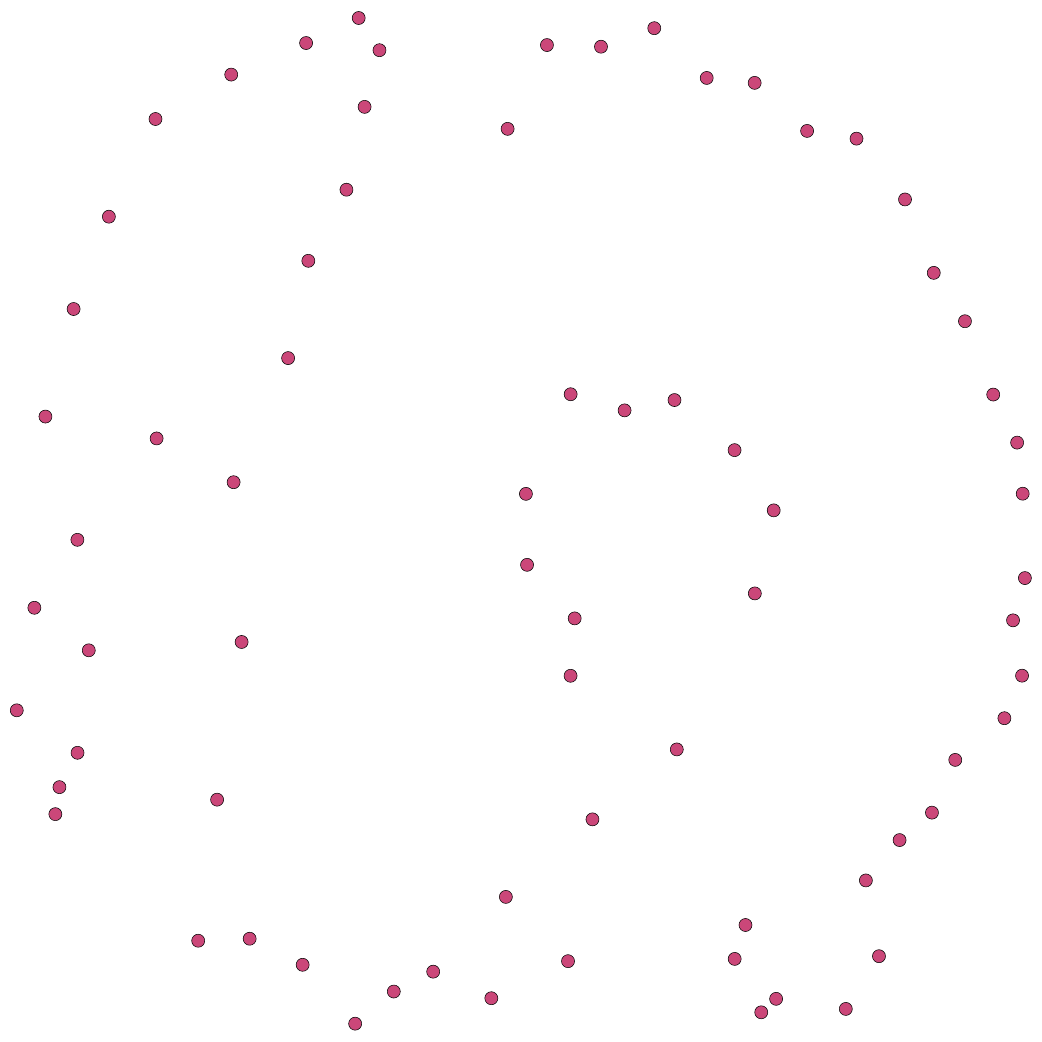} \\
\bottomrule
    \end{tabular}
    \caption{Dimension Reduction Results of tomato dataset from COIL-100}
    \label{table4.2}
\end{table}

\end{document}